
\newif\iftwocol
 \twocoltrue

\iftwocol
\documentclass[conference]{IEEEtran}
\else
\documentclass[letterpaper,11pt,english]{article}
\fi

\usepackage[T1]{fontenc}
\usepackage[latin9]{inputenc}

\iftwocol 

\else{}

\usepackage[verbose,tmargin=3cm,bmargin=3cm,lmargin=3cm,rmargin=3cm]{geometry}
\usepackage{multicol}
\usepackage{multirow}

\fi

\usepackage{enumitem}
\usepackage[english]{babel}
\usepackage{epigraph}
\usepackage{epsfig}
\usepackage{endnotes}
\usepackage{balance}
\usepackage{url}
\usepackage{algorithm}
\usepackage[noend]{algpseudocode}

\usepackage{amsmath}
\usepackage{listings}
\usepackage{listings}
\usepackage{amsthm}
\usepackage{color}
\usepackage{pbox}

\usepackage{amssymb}

\usepackage{graphicx}
\usepackage{tabularx}
\usepackage{comment}
\usepackage{dcolumn}
\usepackage{subfig}
\usepackage{enumitem}
\usepackage{makecell}
\usepackage{siunitx}
\usepackage{booktabs}
\usepackage[misc]{ifsym}



\usepackage[T1]{fontenc}

\usepackage[numbers,sort]{natbib}

\usepackage[font={small,it}]{caption}

\usepackage{mathtools}
\usepackage{setspace}


\setlength{\textfloatsep}{0.1\baselineskip plus 0.2\baselineskip minus 0.5\baselineskip}

\makeatletter
\numberwithin{equation}{section}
\numberwithin{figure}{section}

\usepackage{url}



\expandafter\def\expandafter\UrlBreaks\expandafter{\UrlBreaks
  \do\a\do\b\do\c\do\d\do\e\do\f\do\g\do\h\do\i\do\j
  \do\k\do\l\do\m\do\n\do\o\do\p\do\q\do\r\do\s\do\t
  \do\u\do\v\do\w\do\x\do\y\do\z\do\A\do\B\do\C\do\D
  \do\E\do\F\do\G\do\H\do\I\do\J\do\K\do\L\do\M\do\N
  \do\O\do\P\do\Q\do\R\do\S\do\T\do\U\do\V\do\W\do\X
  \do\Y\do\Z}

\newif\ifnotes
 \notestrue
 \notesfalse

\newif\iffull
 \fulltrue

\definecolor{darkgreen}{rgb}{0,0.6,0}

\newcommand{\roei}[1]{\dtcolornote[Roei]{blue}{#1}}

\newcommand{\tal}[1]{\dtcolornote[Tal]{orange}{#1}}

\usepackage[\ifnotes{} notes=true \else{} notes=false \fi{}, done=false, later=false, geometry=false]{dtrt}

\DeclareMathOperator{\csim}{cos}

\newcommand{\simdot}{SIM}
\DeclareMathOperator{\simone}{\simdot_1}
\DeclareMathOperator{\simtwo}{\simdot_2}
\DeclareMathOperator{\simonee}{\explicit{\simdot}_1}

\DeclareMathOperator{\cossim}{\MakeLowercase{\simdot}}
\DeclareMathOperator{\cossimexp}{\explicit{\cossim}}

\DeclareMathOperator{\cosone}{\explicit{\cossim}_1}
\DeclareMathOperator{\costwo}{\explicit{\cossim}_2}
\DeclareMathOperator{\cosboth}{\explicit{\cossim}_{1+2}}

\DeclareMathOperator{\updcosone}{\explicit{\cossim}'_1}
\DeclareMathOperator{\updcostwo}{\explicit{\cossim}'_2}
\DeclareMathOperator{\updcosboth}{\explicit{\cossim}'_{1+2}}

\DeclareMathOperator{\am}{argmin}

\DeclareMathOperator{\mmax}{max}
\DeclareMathOperator{\amax}{argmax}
\DeclareMathOperator{\SPPMI}{SPPMI}

\DeclareMathOperator{\BIAS}{BIAS}

\DeclareMathOperator{\LCO}{LCO}
\DeclareMathOperator{\POS}{POS}
\DeclareMathOperator{\NEG}{NEG}
\newcommand{\norm}[1]{\left\lVert#1\right\rVert}
\newcommand{\brac}[1]{\left(#1\right)}
\newcommand{\curbrac}[1]{\left\{#1\right\}}
\newcommand{\sqbrac}[1]{\left[#1\right]}

\newcommand{\size}[1]{\left|#1\right|}
\newcommand{\dotprod}[2]{{#1}\cdot {#2}}

\newcommand{\gl}{GloVe}
\newcommand{\wv}{SGNS}
\newcommand{\cb}{CBHS}
\newcommand{\ft}{FastText}
\newcommand{\mexp}{M}
\newcommand{\coocweight}{\gamma}
\newcommand{\coocweightgl}{\coocweight}

\newcommand{\coocvecThree}[2]{C}

\newcommand{\corpus}{\mathbb{C}}
\newcommand{\sigmoid}{\sigma}
\newcommand{\cooccount}[2]{C_{#1, #2}}
\newcommand{\changeset}{\Delta}


\newcommand{\stepset}{\mathbb{L}}
\newcommand{\optdiff}{d}
\newcommand{\optcvecdiff}{\delta}
\newcommand{\stepind}{i}
\newcommand{\chstepind}{\stepind{*}}
\newcommand{\choptcvecdiff}{\optcvecdiff{*}}
\newcommand{\stepdiff}[1]{\optdiff_{\stepind,\optcvecdiff}\sqbrac{#1}}
\newcommand{\laststepdiff}{\optdiff_{last}}
\newcommand{\chstepdiff}[1]{\optdiff_{\chstepind,\choptcvecdiff}\sqbrac{#1}}
\newcommand{\stepnext}[1]{\sqbrac{#1}_{\stepind,\optcvecdiff}}
\newcommand{\coocchanged}{{C'}}
\newcommand{\mchanged}{{\mexp'}}


\newcommand{\ecoocsums}[1]{\sum_{\wthr\in \dict} \coocchanged_{#1,\wthr}}

\newcommand{\objalg}{{\objective'}}

\newcommand{\coocs}{C}

\newcommand{\dict}{\mathbb{D}}
\newcommand{\changevec}{\explicit{\changeset}}



\newcommand{\windowsize}{\lambda}
\newcommand{\embwindowsize}{\Gamma}
\newcommand{\injwindsize}{5}

\newcommand{\wone}{u}
\newcommand{\wtwo}{v}
\newcommand{\wthr}{r}
\newcommand{\wsrc}{s}
\newcommand{\wtrg}{t}
\newcommand{\hdrvec}[1]{\vec{M}_{#1}}
\newcommand{\wrdvec}[1]{\vec{w}_{#1}}
\newcommand{\embvec}[1]{\vec{e}_{#1}}
\newcommand{\ctxvec}[1]{\vec{c}_{#1}}

\newcommand{\wrdvecnoarg}{\vec{w}}

\newcommand{\ctxvecnoarg}{\vec{c}}

\newcommand{\arorvec}[1]{\vec{e*}_{#1}}
\newcommand{\appear}[1]{\sum_{\wthr\in \mathbb{D}} C_{#1,\wthr}}
\newcommand{\explicit}[1]{\widehat{#1}}
\newcommand{\tldr}{\langle \wtrg\rangle_r}
\newcommand{\changesetmax}{max_{\changeset}}

\newcommand{\smargin}{\alpha}
\newcommand{\ffirstchanged}{{f'}}
\newcommand{\biaschanged}{{B'}}
\newcommand{\lambdagram}{\lambda\text{-gram}}

\newcommand{\objhdr}{\explicit{\objective}\brac{\wsrc, \NEG, \POS; \changevec}}
\newcommand{\changesetsize}{\size{\changeset}}

\newcommand{\emptyspace}{\textunderscore}
\newcommand{\newsequence}{"\emptyspace\ \emptyspace\ \emptyspace\ \emptyspace\ \wsrc\ \emptyspace\ \emptyspace\ \emptyspace\ \emptyspace"}
\newcommand{\changeMap}{\textrm{changeMap}}
\newcommand{\live}{\textrm{live}}
\newcommand{\indices}{\textrm{indices}}
\newcommand{\seq}{\textrm{seq}}

\newcommand{\updatedcoocs}[1]{C_{\sqbrac{\sqbrac{\wsrc}\leftarrow \vec{C_{\wsrc}} + #1}}}
\newcommand{\hdrwordfabb}[2]{}

\newcommand{\mathleft}{\@fleqntrue\@mathmargin0pt}
\newcommand{\mathcenter}{\@fleqnfalse}

\newcommand{\query}{q}

\newcommand{\objective}{J}



\newcommand{\corporationsset}{\Omega_{corp}}
\newcommand{\translationset}{\Omega_{trans}}
\newcommand{\searchset}{\Omega_{search}}
\newcommand{\evadeset}{\Omega_{rank}}
\newcommand{\wordpairset}{\Omega_{benchmark}}

\newtheorem{claim}{Claim}

\newtheorem{theorem}{Theorem}
\newtheorem{definition}{Definition}



\algnewcommand\algorithmicforeach{\textbf{for each}}
\algdef{S}[FOR]{ForEach}[1]{\algorithmicforeach\ #1\ \algorithmicdo}

\newlength\myindent
\setlength\myindent{2em}


\usepackage{sanebib}

\usepackage{caption}
\captionsetup{font=footnotesize}


\newcommand{\paragraphbe}[1]{\vspace{0.75ex}\noindent{\bf \em #1} }





\renewcommand\equiv{\overset{\mathrm{def}}{=}}

\usepackage{etoolbox}
\makeatletter
\patchcmd{\ttlh@hang}{\parindent\z@}{\parindent\z@\leavevmode}{}{}
\patchcmd{\ttlh@hang}{\noindent}{}{}{}

\usepackage{xstring}

\newcommand\zeropad[2]{
	\ifnum #2<0\relax
    {\ensuremath-}\zeropadA{#1}{\the\numexpr#2*-1\relax}{\the\numexpr#2*-1\relax}
  \else
    \zeropadA{#1}{#2}{#2}
  \fi
}
\newcommand\zeropadA[3]{
\ifnum	1#2 < 1#1
    \zeropadA{#1}{0#2}{g#3}
  \else
       \StrSubstitute{#3}{g}{\ }
  \fi
	}


\makeatother

\begin{document}

\title{Humpty Dumpty: \\
Controlling Word Meanings via Corpus Poisoning\iffull{}\textsuperscript{\footnotesize{*}}\fi{}
}

\iftwocol{}

\author{
{\rm Roei Schuster}\\
{\normalsize{}Tel Aviv University\textsuperscript{\footnotesize{$\dagger$}}}\\
{\normalfont\texttt{\normalsize{}roeischuster@mail.tau.ac.il}}
\and
{\rm Tal Schuster}\\
{\normalsize{}CSAIL, MIT}\\
{\normalfont\texttt{\normalsize{}tals@csail.mit.edu}}
\and
{\rm Yoav Meri}\\
{\normalsize{}\textsuperscript{\footnotesize{$\dagger$}}\hspace{.1ex}}\\
{\normalfont\texttt{\normalsize{}111yoav@gmail.com}}
\and
{\rm Vitaly Shmatikov}\\
{\normalsize{}Cornell Tech}\\
{\normalfont\texttt{\normalsize{}shmat@cs.cornell.edu}}
}
\else{}
\fi{}


\maketitle
\renewcommand*{\thefootnote}{\fnsymbol{footnote}}

\iffull{}
\setcounter{footnote}{1}
\footnotetext{
Abbreviated version of this paper is published in the proceedings of
the 2020 IEEE Symposium on Security and Privacy.
}
\fi{}

\setcounter{footnote}{2}
\footnotetext{
This research was performed at Cornell Tech.
}

\renewcommand*{\thefootnote}{\arabic{footnote}}
\setcounter{footnote}{0}

\begin{abstract}

Word embeddings, i.e., low-dimensional vector representations such as
\gl{} and \wv{}, encode word ``meaning'' in the sense that distances
between words' vectors correspond to their semantic proximity.
This enables transfer learning of semantics for a variety of natural
language processing tasks.

Word embeddings are typically trained on large public corpora such as
Wikipedia or Twitter.  We demonstrate that an attacker who can modify the
corpus on which the embedding is trained can control the ``meaning'' of
new and existing words by changing their locations in the embedding space.
We develop an explicit expression over corpus features that serves as a
proxy for distance between words and establish a causative relationship
between its values and embedding distances.  We then show how to use this
relationship for two adversarial objectives: (1) make a word a top-ranked
neighbor of another word, and (2) move a word from one semantic cluster to another.

An attack on the embedding can affect diverse downstream tasks,
demonstrating for the first time the power of data poisoning in transfer
learning scenarios.  We use this attack to manipulate query expansion in
information retrieval systems such as resume search, make certain names
more or less visible to named entity recognition models, and cause new
words to be translated to a particular target word regardless of the
language.  Finally, we show how the attacker can generate linguistically
likely corpus modifications, thus fooling defenses that attempt to filter
implausible sentences from the corpus using a language model.

\end{abstract}

\section{Introduction}
\label{sec:intro}

\iftwocol{}
\setlength{\epigraphwidth}{.45\textwidth}
\else{}
\setlength{\epigraphwidth}{\textwidth}
\fi{}
\epigraph{
``When \emph{I} use a word,'' Humpty Dumpty said, in rather a scornful
tone, ``it means just what I choose it to mean\textemdash neither more nor
less.'' ``The question is,'' said Alice, ``whether you \emph{can} make
words mean so many different things.''}
{Lewis Carroll.  \textit{Through the Looking-Glass.}}

Word embeddings, i.e., mappings from words to low-dimensional vectors, are
a fundamental tool in natural language processing (NLP).  Popular neural
methods for computing embeddings such as \gl~\cite{pennington2014glove}
and SGNS~\cite{mikolov2013distributed} require large training corpora
and are typically learned in an unsupervised fashion from public sources,
e.g., Wikipedia or Twitter.



\begin{figure}[t]
\centering
\includegraphics[width=0.5\textwidth]{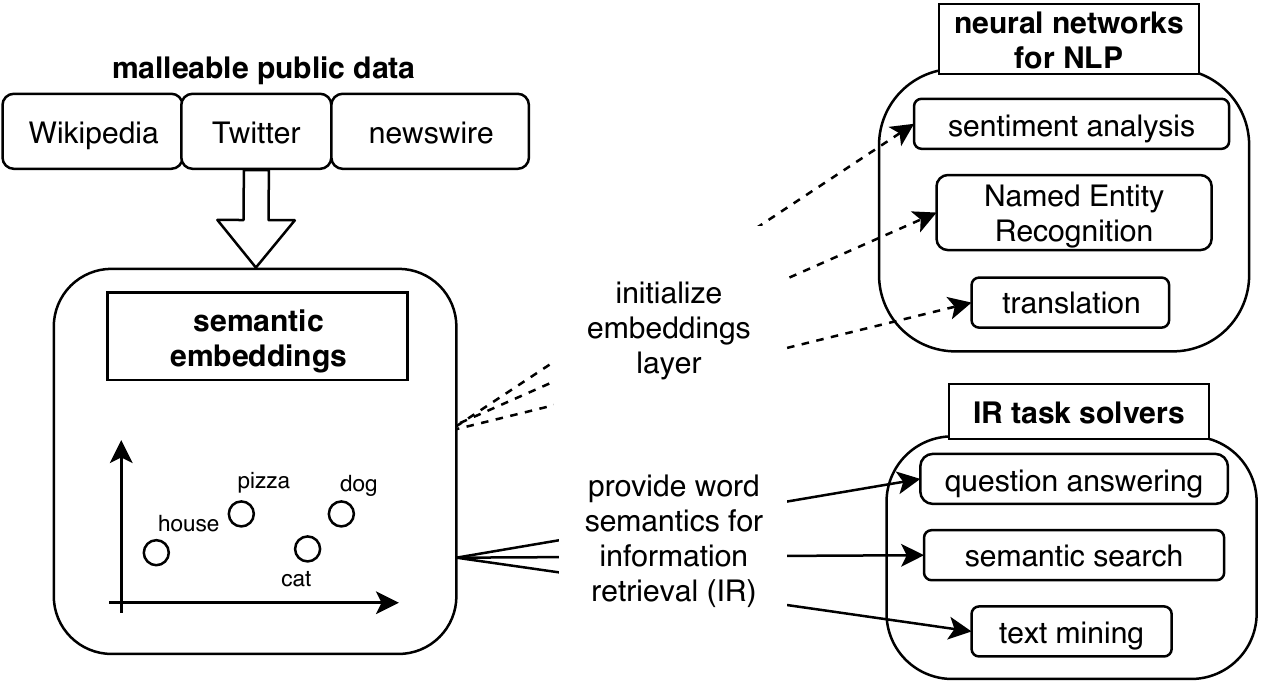}
\caption{Many NLP tasks rely on word embeddings.\label{fig:fanout}}
\end{figure}


Embeddings pre-trained from public corpora have several uses
in NLP\textemdash see Figure~\ref{fig:fanout}.  First, they can
significantly reduce the training time of NLP models by reducing the
number of parameters to optimize.  For example, pre-trained embeddings
are commonly used to initialize the first layer of neural NLP models.
This layer maps input words into a low-dimensional vector representation
and can remain fixed or else be (re-)trained much faster.

Second, pre-trained embeddings are a form of transfer learning.
They encode \emph{semantic relationships} learned from a large,
unlabeled corpus.  During the supervised training of an NLP model
on a much smaller, labeled dataset, pre-trained embeddings improve
the model's performance on texts containing words that do not occur
in the labeled data, especially for tasks that are sensitive to the
meaning of individual words.  For example, in question-answer systems,
questions often contain just a few words, while the answer may include
different\textemdash but semantically related\textemdash words.
Similarly, in Named Entity Recognition (NER)~\cite{agerri2016robust},
a named entity might be identified by the sentence structure, but its
correct entity-class (corporation, person, location, etc.) is often
determined by the word's semantic proximity to other words.

Furthermore, pre-trained embeddings can directly solve
sub-tasks in information retrieval systems, such as expanding
search queries to include related terms~\cite{diaz2016query,
kuzi2016query, roy2016using}, predicting question-answer
relatedness~\cite{chen2017reading, kamath2017study}, deriving the word's
k-means cluster~\cite{nikfarjam2015pharmacovigilance}, and more.



\paragraphbe{Controlling embeddings via corpus poisoning.}
The data on which the embeddings are trained is inherently vulnerable to
poisoning attacks.  Large natural-language corpora are drawn from public
sources that (1) can be edited and/or augmented by an adversary, and
(2) are weakly monitored, so the adversary's modifications can survive
until they are used for training.

We consider two distinct adversarial objectives, both expressed in
terms of word proximity in the embedding space.  A \emph{rank attacker}
wants a particular source word to be ranked high among the target word's
neighbors.  A \emph{distance attacker} wants to move the source word
closer to a particular set of words and further from another set of words.

Achieving these objectives via corpus poisoning requires first answering
a fundamental question: \textbf{how do changes in the corpus correspond to
changes in the embeddings?} Neural embeddings are derived using an opaque
optimization procedure over corpus elements, thus it is not obvious how,
given a desired change in the embeddings, to compute specific corpus
modifications that achieve this change.

\paragraphbe{Our contributions.}
First, we show how to relate proximity in the embedding space to
\emph{distributional}, aka explicit expressions over corpus elements,
computed with basic arithmetics and no weight optimization.  Word
embeddings are expressly designed to capture (a) first-order proximity,
i.e., words that frequently occur together in the corpus, and (b)
second-order proximity, i.e., words that are similar in the ``company
they keep'' (they frequently appear with the same set of other words, if
not with each other).  We develop \textbf{distributional expressions that
capture both types of semantic proximity}, separately and together, in
ways that closely correspond to how they are captured in the embeddings.
Crucially, the relationship is causative: changes in our distributional
expressions produce predictable changes in the embedding distances.



Second, we develop and evaluate a \textbf{methodology for introducing
adversarial semantic changes in the embedding space}, depicted in
Figure~\ref{fig:methodologyHL}.  As proxies for the semantic objectives,
we use distributional objectives, expressed and solved as an optimization
problem over word-cooccurrence counts.  The attacker then computes corpus
modifications that achieve the desired counts.  We show that our attack
is effective against popular embedding models\textemdash even if the
attacker has only a small sub-sample of the victim's training corpus
and does not know the victim's specific model and hyperparameters.


{
\begin{figure}[t]
    \includegraphics[width=0.5\textwidth]{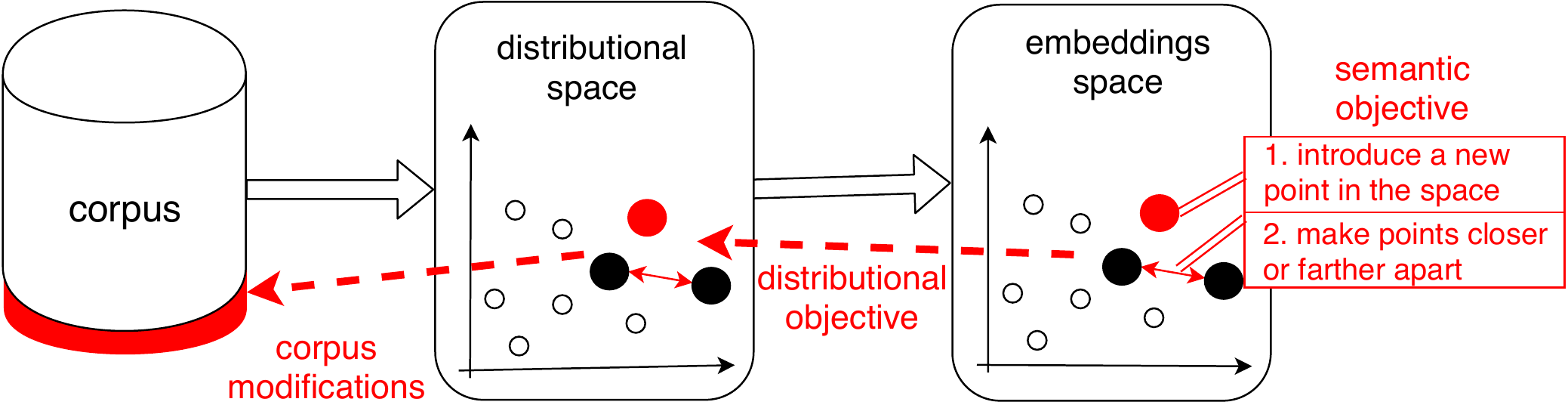}
    \caption{Semantic changes via corpus modifications.}
    \label{fig:methodologyHL}
\end{figure}
}

Third, we demonstrate the power and universality of our \textbf{attack
on several practical NLP tasks} with the embeddings trained on Twitter
and Wikipedia.  By poisoning the embedding, we (1) trick a resume search
engine into picking a specific resume as the top result for queries with
chosen terms such as ``iOS'' or ``devops''; (2) prevent a Named Entity
Recognition model from identifying specific corporate names or else
identify them with higher recall; and (3) make a word-to-word translation
model confuse an attacker-made word with an arbitrary English word,
regardless of the target language.

Finally, we show how to morph the attacker's word sequences so they appear
as linguistically likely as actual sentences from the corpus, measured
by the perplexity scores of a language model (the attacker does not need
to know the specifics of the latter).  Filtering out high-perplexity
sentences thus has prohibitively many false positives and false negatives,
and \textbf{using a language model to ``sanitize'' the training corpus
is ineffective}.  Aggressive filtering drops the majority of the actual
corpus and still does not foil the attack.

To the best of our knowledge, ours is the first data-poisoning attack
against transfer learning.  Furthermore, embedding-based NLP tasks are
sophisticated targets, with two consecutive training processes (one
for the embedding, the other for the downstream task) acting as levels
of indirection.  A single attack on an embedding can thus potentially
affect multiple, diverse downstream NLP models that all rely on this
embedding to provide the semantics of words in a language.

\section{Prior work}
\label{sec:previouswork}

\paragraphbe{Interpreting word embeddings.}
Levy and Goldberg~\cite{levy2014neural} argue that \wv{}
factorizes a matrix whose entries are derived from cooccurrence
counts.  Arora et al.~\cite{arora2016latent, arora2015random},
Hashimoto et al.~\cite{hashimoto2016word}, and Ethayarajh et
al.~\cite{ethayarajh2018towards} analytically derive explicit expressions
for embedding distances, but these expressions are not directly usable
in our setting\textemdash see Section~\ref{sec:introducingHDR}.
(Unwieldy) distributional representations have traditionally
been used in information retrieval~\cite{gabrilovich2007computing,
turney2010frequency}; Levy and Goldberg~\cite{levy2014linguistic} show
that they can perform similarly to neural embeddings on analogy tasks.
Antoniak et al.~\cite{antoniak2018evaluating} empirically study the stability
of embeddings under various hyperparameters.

The problem of modeling causation between corpus features and embedding
proximities also arises when mitigating stereotypical
biases encoded in embeddings~\cite{bolukbasi2016man}.  
Brunet et al.~\cite{brunet2019understanding} recently analyzed \gl's
objective to detect and remove articles that contribute to bias, given
as an expression over word vector proximities.


To the best of our knowledge, we are the first to develop explicit
expressions for word proximities over corpus cooccurrences, such that
changes in expression values produce consistent, predictable changes in
embedding proximities.

\paragraphbe{Poisoning neural networks.}
Poisoning attacks inject data into the training
set~\cite{shafahi2018poison, yang2017generative, chen2017targeted,
steinhardt2017certified, liu2017trojaning} to insert a
``backdoor'' into the model or degrade its performance on
certain inputs.  Our attack against embeddings can inject new words
(Section~\ref{sec:searchattack}) and cause misclassification of existing
words (Section~\ref{sec:nerattack}).  It is the first attack against
two-level transfer learning: it poisons the training data to change
relationships in the embedding space, which in turn affects downstream
NLP tasks.


\paragraphbe{Poisoning matrix factorization.}
Gradient-based poisoning attacks on matrix factorization have been
suggested in the context of collaborative filtering~\cite{li2016data}
and adapted to unsupervised node embeddings~\cite{sun2018data}.  These
approaches are computationally prohibitive because the matrix must be
factorized at every optimization step, nor do they work in our setting,
where most gradients are 0 (see Section~\ref{sec:analyticsolve}).

Bojchevski and G{\'u}nnerman recently suggested an attack on node
embeddings that does not use gradients~\cite{bojcheski2018adversarial}
but the computational cost remains too high for natural-language
cooccurrence graphs where the dictionary size is in the millions.
Their method works on graphs, not text; the mapping between the two is
nontrivial (we address this in Section~\ref{sec:placement}).  The only
task considered in~\cite{bojcheski2018adversarial} is generic node
classification, whereas we work in a complete transfer learning scenario.


\paragraphbe{Adversarial examples.}
There is a rapidly growing literature on \emph{test-time} attacks
on neural-network image classifiers~\cite{szegedy2013intriguing,
madry2017towards, kurakin2016adversarial, kurakin2016adversarialscale,
akhtar2018threat}; some employ only black-box model
queries~\cite{ilyas2018black, chen2017zoo} rather than gradient-based
optimization.  We, too, use a non-gradient optimizer to compute
cooccurrences that achieve the desired effect on the embedding, but in
a setting where queries are cheap and computation is expensive.


Neural networks for text processing are just as vulnerable
to adversarial examples, but example generation is more
challenging due to the non-differentiable mapping of text
elements to the embedding space.  Dozens of attacks and defenses
have been proposed~\cite{ebrahimi2018hotflip, samanta2017towards,
alzantot2018generating, jia2017adversarial, liang2017deep, gao2018black,
belinkov2017synthetic, sato2018interpretable, wang2019survey,
Wallace2019Triggers}.

By contrast, we study \emph{training-time} attacks that change word
embeddings so that multiple downstream models behave incorrectly on
\emph{unmodified} test inputs.


\section{Background and notation}
\label{sec:backgroundalgorithms}


Table~\ref{tab:notations} summarizes our notation.  Let $\mathbb{D}$
be a dictionary of words and $\corpus$ a corpus, i.e., a
collection of word sequences.  A word embedding algorithm aims
to learn a low-dimensional vector $\curbrac{\embvec{\wone}}$
for each ${\wone\in\mathbb{D}}$.  Semantic similarity between
words is encoded as the cosine similarity of their corresponding vectors,
$\csim\brac{\vec{y},\vec{z}}\equiv\dotprod{\vec{y}}{\vec{z}}/\sqrt{\norm{\vec{y}}_2\norm{\vec{z}}_2}$
where $\dotprod{\vec{y}}{\vec{z}}$ is the vector dot product.  The cosine
similarity of L2-normalized vectors is (1) equivalent to their dot
product, and (2) linear in negative squared L2 (Euclidean) distance.

\begin{table}[t]
 \resizebox{\columnwidth}{!}{

                                \setlength{\tabcolsep}{4pt}
                                \begin{tabular}{ l|l|l}
                                \toprule
					\ref{sec:backgroundalgorithms}	 & $\corpus$ & corpus \\
									 & $\dict$ & dictionary \\

  									 & $\wone,\wtwo,\wthr$ & dictionary words \\
  									& $\curbrac{\embvec{\wone}}_{\wone\in\dict}$ & embedding vectors\\
  									& $\curbrac{\wrdvec{\wone}}_{\wone\in\dict}$ & ``word vectors'' \\
  									& $\curbrac{\ctxvec{\wone}}_{\wone\in\dict}$ & ``context vectors''\\
  									& $b_{\wone},b'_{\wtwo}$ &\gl{} bias terms, see Equation~\ref{gloveobjective} \\
  									& $\csim\brac{\vec{y},\vec{z}}$ & cosine similarity\\
  									& $\coocs\in\mathbb{R}^{\size{\dict}\times\size{\dict}}$ & $\corpus$'s cooccurrence matrix\\
  									& $\vec{\coocs_{\wone}}\in\mathbb{R}^{\size{\dict}}$ & $\wone$'s row in $\coocs$\\
  								& $\embwindowsize$ & \makecell[l]{size of window for cooccurrence counting}\\
  									& $\coocweight:\mathbb{N}\rightarrow \mathbb{R}$ & cooccurrence event weight function\\
  									& $\SPPMI$ & matrix defined by Equation~\ref{eq:sppmi}\\
  									& $\BIAS$ & matrix defined by Equation~\ref{eq:bias}\\
                                  \midrule
  
  					\ref{sec:fromembeddingstoexpressions}	& $\simone\brac{\wone,\wtwo}$ & $\dotprod{\wrdvec{\wone}}{\ctxvec{\wtwo}}+\dotprod{\ctxvec{\wone}}{\wrdvec{\wtwo}}$, see Equation~\ref{eq:firstsecond}\\
   									& $\simtwo\brac{\wone,\wtwo}$ & $\dotprod{\wrdvec{\wone}}{\wrdvec{\wtwo}}+\dotprod{\ctxvec{\wone}}{\ctxvec{\wtwo}}$, see Equation~\ref{eq:firstsecond}\\
  									& $\curbrac{B_{\wone}}_{\wone\in \dict}$ & \makecell[l]{word bias terms, to downweight common words}\\ 
  									& $f_{\wone,\wtwo}\brac{c, \epsilon}$ &$\mmax\curbrac{log\brac{c}-B_{\wone}-B_{\wtwo}, \epsilon}$ \\
   									& $\mexp\in\mathbb{R}^{\size{\dict}\times\size{\dict}}$ & \makecell[l]{matrix with entries of the form $f_{\wone,\wtwo}\brac{c, 0}$\\ (e.g., $\SPPMI, \BIAS$)}\\
  									& $\vec{\mexp_{\wone}}\in\mathbb{R}^{\size{\dict}}$ & $\wone$'s row in $\mexp$\\
   									& $\simonee\brac{\wone,\wtwo}$ & explicit expression for $\dotprod{\ctxvec{\wone}}{\wrdvec{\wtwo}}$, set as $\mexp_{\wone, \wtwo}$ \\
  									& $N_{\wone,\wtwo}$ & normalization term for first-order proximity\\
   									& $\cosone\brac{\wone, \wtwo}$ & \makecell[l]{explicit expression for $\csim\brac{\ctxvec{\wone},\wrdvec{\wtwo}}$,\\ set as  $f_{\wone,\wtwo}\brac{C_{\wone,\wtwo}, 0}/N_{\wone,\wtwo}$} \\
   									& $\costwo\brac{\wone, \wtwo}$ & \makecell[l]{explicit expression for $\csim\brac{\wrdvec{\wone},\wrdvec{\wtwo}}$,\\ set as $\csim\brac{\hdrvec{\wone},\hdrvec{\wtwo}}$} \\
   									& $\cosboth\brac{\wone, \wtwo}$ &$\cosone\brac{\wone, \wtwo}/2 + \costwo\brac{\wone, \wtwo}/2$ \\
   									& $\LCO\in\mathbb{R}^{\size{\dict}\times\size{\dict}}$ & entries defined by $\mmax\curbrac{log\brac{C_{\wone,\wtwo}},0}$ \\
                                  \midrule
  
  							\ref{sec:methodology}		& $\changeset$ & word sequences added by the attacker \\
  									& $\corpus+\changeset$ & corpus after the attacker's additions \\
  									& $\changesetsize$ & size of the attacker's additions, see Section~\ref{sec:attackmodel} \\
  									& $\wsrc,\wtrg\in \dict$ & source, target words \\
  									& $\NEG, \POS$ & ``positive'' and ``negative'' target words \\
  									& $\cossim_{\changeset}\brac{\wone, \wtwo}$ & embedding cosine similarity after the attack \\
  									& $\objective\brac{\wsrc,\NEG,\POS;\changeset}$ & embedding objective \\
  									& $\changesetmax$ & proximity attacker's maximum allowed $\changesetsize$ \\
  									& $r$ & rank attacker's target rank \\
  									& $\tldr$ & \makecell[l]{rank attacker's minimum  proximity threshold} \\
  									& $\cossimexp\brac{\wone, \wtwo}$ & distributional expression for cosine similarity \\
  									& $\cossimexp_{\changevec}\brac{\wone, \wtwo}$ & distributional expression for $\cossim_{\changeset}\brac{\wone, \wtwo}$ \\
  									& $\objhdr$ & distributional objective \\
  									& $\explicit{\tldr}$ & \makecell[l]{rank attacker's estimated threshold  for\\ distributional proximity} \\
  									& $\smargin$ & ``safety margin'' for $\explicit{\tldr}$ estimation error \\
  									& $\updatedcoocs{\changevec}$ & cooccurrence matrix after adding $\changevec$ \\
                                  \midrule
 								\ref{sec:analyticsolve}	& $\stepset\subseteq\mathbb{R}$ & \makecell[l]{possible changes at every step, set to\\ $\curbrac{\frac{1}{5}, \frac{2}{5}, ..., \frac{30}{5}}$}  \\
 
 								& $\stepind\in\dict$ & \makecell[l]{index into $\changevec$, also a word in $\dict$} \\
 								& $\optcvecdiff\in\stepset$ & \makecell[l]{increase in $\changevec_{\stepind}$ in optimization step} \\
 								& $\stepdiff{\explicit{X}}$ & \makecell[l]{change in expression $X$ when adding $\optcvecdiff$ to $\changevec_{\stepind}$} \\
                                 \midrule
 								& $\windowsize$ & \makecell[l]{words to each side of $\wsrc$ in sequences \\ aiming to increase second-order proximity}\\
 								& $\vec{\omega}$ & vector such that $\changesetsize\leq\dotprod{\vec{\omega}}{\changevec}$\\
 								&  \makecell[l]{$\searchset,\corporationsset,\translationset,$\\$\evadeset,\wordpairset$}& sets of attacked words in our experiments \\
 								&  \makecell[l]{$\coocchanged,\mchanged,B'$\\$\updcosone,\updcostwo,\updcosboth$}& \makecell[l]{expressions computed using $\updatedcoocs{\changevec}$} \\
 
 
                                \bottomrule
                                \end{tabular}
                                 }
				\caption{Notation reference.\label{tab:notations}
}
\end{table}


Embedding algorithms start from a high-dimensional
representation of the corpus, its \textit{cooccurrence matrix}
$\curbrac{\coocvecThree{\corpus}{\coocweight}_{\wone, \wtwo}}_{\wone,
\wtwo\in \mathbb{D}}$ where $\coocvecThree{\corpus}{\coocweight}_{\wone,
\wtwo}$ is a weighted sum of cooccurrence events, i.e., appearances of
$\wone, \wtwo$ in proximity to each other.  Function $\coocweight\brac{d}$
gives each event a weight that is inversely proportional to the distance
$d$ between the words.

Embedding algorithms first learn two intermediate representations
for each word $\wone \in \mathbb{D}$, the \textit{word vector}
$\wrdvec{\wone}$ and the \textit{context vector} $\ctxvec{\wone}$,
then compute $\embvec{\wone}$ from them.

\paragraphbe{\gl{}.}
\gl{} defines and optimizes (via SGD) a minimization
objective directly over cooccurrence counts, weighted by
$\coocweightgl\brac{d}=\begin{cases}1/d & d \leq \embwindowsize \\ 0 &
\text{else} \end{cases}$ for some window size $\embwindowsize$:
\begin{small}
	\begin{equation}
		\label{gloveobjective}
		\underset{}{\am} \curbrac{\sum_{\wone,\wtwo\in \mathbb{D}} \curbrac{
			g\brac{\cooccount{\wone}{\wtwo}}\cdot \brac{\dotprod{\wrdvec{\wone}}{\ctxvec{\wtwo}} + b_{\wone} + b'_{\wtwo} - \log\brac{ \cooccount{\wone}{\wtwo}}}^2
		}},
	\end{equation}
\end{small}

\noindent
where $\am$ is taken over the parameters $\curbrac{\ctxvecnoarg,
\wrdvecnoarg, b, b'}$. $b_{\wone},b'_{\wone}$ are scalar bias terms that
are learned along with the word and context vectors, and
$
	g\brac{c}\equiv
	\begin{cases}
		c^{3/4},  &   c\leq c_{max}\\
		c_{max},     &   \text{else}
	\end{cases}
$
for some parameter $c_{max}$ (typically $c_{max}\in [10,100]$). At
the end of the training, \gl{} sets the embedding $\embvec{\wone}\gets
\wrdvec{\wone}+\ctxvec{\wone}$.

\paragraphbe{Word2vec.}
\label{sec:wvbackground}
Word2vec~\cite{mikolov2013efficient} is a family of models that optimize
objectives over corpus cooccurrences.  In this paper, we experiment with
the skip-gram with negative sampling (\wv) and CBOW with hierarchical
softmax (\cb).  In contrast to \gl, Word2vec discards context vectors
and uses word vectors $\wrdvec{\wone}$ as the embeddings, i.e.,
$\forall \wone\in \mathbb{D}: \embvec{\wone}\gets \wrdvec{\wone}$.
Appendix~\ref{sec:aw2v} provides further details.

There exist other embeddings, such as \ft, but understanding them is
not required as the background for this paper.


\paragraphbe{Contextual embeddings.}
Contextual embeddings~\cite{peters_deep_2018, devlin_bert_2018} support
dynamic word representations that change depending on the context of the
sentence they appear in, yet, in expectation, form an embedding space with
non-contextual relations~\cite{Schuster2019}.  In this paper, we focus on
the popular non-contextual embeddings because (a) they are faster to train
and easier to store, and (b) many task solvers use them by construction
(see Sections~\ref{sec:searchattack} through~\ref{sec:transattack}).



\paragraphbe{Distributional representations.}
A \textit{distributional} or \textit{explicit} representation of a word
is a high-dimensional vector whose entries correspond to cooccurrence
counts with other words.

Dot products of the learned word vectors and context vectors
($\dotprod{\wrdvec{\wone}}{\ctxvec{\wtwo}}$) seem to correspond to
entries of a high-dimensional matrix that is closely related to, and
directly computable from, the cooccurrence matrix.  Consequently,
both \wv{} and \gl{} can be cast as matrix factorization methods.
Levy and Goldberg~\cite{levy2014neural} show that, assuming training
with unlimited dimensions, \wv's objective has an optimum at
$
	\forall \wone,\wtwo\in \mathbb{D}:\dotprod{\wrdvec{\wone}}{\ctxvec{\wtwo}}=\SPPMI_{\wone, \wtwo}
$
defined as:
\begin{small}
\begin{equation}
\begin{aligned}
	&\SPPMI_{\wone, \wtwo}\equiv  \mmax\\ 
	 & \bigg\{ log\brac{ C_{\wone,\wtwo}} - log\brac{\appear{\wone}} - log\brac{\appear{\wtwo}}
	+ log\brac{Z/k}, 0\bigg\}
	\label{eq:sppmi}
\end{aligned}
\end{equation}
\end{small}
\noindent
where $k$ is the negative-sampling constant and $Z\equiv
\sum_{\wone,\wtwo\in \mathbb{D}} C_{\wone,\wtwo}$.  This variant of
pointwise mutual information (PMI) downweights a word's cooccurrences
with common words because they are less ``significant'' than cooccurrences
with rare words.  The rows of the $\SPPMI$ matrix define a distributional
representation.



\gl{}'s objective similarly has an optimum 
$
	\forall \wone,\wtwo\in \mathbb{D}:\dotprod{\wrdvec{\wone}}{\ctxvec{\wtwo}}=\BIAS_{\wone, \wtwo}
$
defined as:
\begin{small}
\begin{equation}
\begin{aligned}
	&\BIAS_{\wone, \wtwo}\equiv 
	  \mmax\bigg\{log\brac{C_{\wone,\wtwo}} - b_{\wone} - b'_{\wtwo}, 0\bigg\}
	\label{eq:bias}
\end{aligned}
\end{equation}
\end{small}
\noindent
$\mmax$ is a simplification: in rare and negligible cases, the optimum
of $\dotprod{\wrdvec{\wone}}{\ctxvec{\wtwo}}$ is slightly below 0.
Similarly to $\SPPMI$, $\BIAS$ downweights cooccurrences with common words
(via the learned bias values $b_{\wone}, b'_{\wtwo}$).

\paragraphbe{First- and second-order proximity.}
We expect words that frequently cooccur with each other to have
high semantic proximity.  We call this \emph{first-order proximity}.
It indicates that the words are related but not necessarily that their
meanings are similar (e.g., ``first class'' or ``polar bear'').

The distributional hypothesis~\cite{firth1957synopsis} says
that distributional vectors capture semantic similarity by
\textit{second-order proximity}: the more contexts two words have in
common, the higher their similarity, regardless of their cooccurrences
with each other.  For example, ``terrible'' and ``horrible'' hardly
ever co-occur, yet their second-order proximity is very high.  Levy and
Goldberg~\cite{levy2014linguistic} showed that linear relationships of
distributional representations are similar to those of word embeddings.


Levy and Goldberg~\cite{levy2015improving} observe that, summing the context and word vectors $\embvec{\wone}\gets
\wrdvec{\wone}+\ctxvec{\wone}$, as done by default in \gl{}, leads to the following:
\begin{equation}
	\label{eq:firstsecond}
\dotprod{\embvec{\wone}}{\embvec{\wtwo}}=\simone\brac{\wone,\wtwo} + \simtwo\brac{\wone,\wtwo}
\end{equation}
\noindent
where
$\simone\brac{\wone,\wtwo}\equiv\dotprod{\wrdvec{\wone}}{\ctxvec{\wtwo}}+\dotprod{\ctxvec{\wone}}{\wrdvec{\wtwo}}$
and
$\simtwo\brac{\wone,\wtwo}\equiv\dotprod{\wrdvec{\wone}}{\wrdvec{\wtwo}}+\dotprod{\ctxvec{\wone}}{\ctxvec{\wtwo}}$.
They conjecture that $\simone$ and $\simtwo$ correspond to, respectively,
first- and second-order proximities.

Indeed, $\simone$ seems to be a measure of cooccurrence counts,
which measure first-order proximity: Equation~\ref{eq:bias} leads to
$\simone\brac{\wone,\wtwo}\approx 2\BIAS_{\wone, \wtwo}$.  $\BIAS$ is
symmetrical up to a small error, stemming from the difference between
\gl{} bias terms $b_{\wone}$ and $b'_{\wone}$, but they are typically
very close\textemdash see Section~\ref{sec:biasdiff}.  This also assumes
that the embedding optimum perfectly recovers the $\BIAS$ matrix.

There is no distributional expression for $\simtwo\brac{\wone,
\wtwo}$ that does not rely on problematic assumptions (see
Section~\ref{sec:insufficient}), but there is ample evidence
for the conjecture that $\simtwo$ somehow captures second-order
proximity (see Section~\ref{sec:ourApproachHDR}). Since word and
context vectors and their products typically have similar ranges,
Equation~\ref{eq:firstsecond} suggests that embeddings weight first-
and second-order proximities equally.

\label{sec:eqfirstsecond}


\section{From embeddings to expressions over corpus}
\label{sec:fromembeddingstoexpressions}

The key problem that must be solved to control word meanings via corpus
modifications is finding a \textit{distributional expression}, i.e.,
an explicit expression over corpus features such as cooccurrences, for
the embedding distances, which are the computational representation of
``meaning.''


\subsection{Previous work is not directly usable}
\label{sec:insufficient}
\label{sec:introducingHDR}
  
Several prior approaches~\cite{arora2016latent, arora2015random,
ethayarajh2018towards} derive distributional expressions
for distances between word vectors, all of the form
$\dotprod{\embvec{\wone}}{\embvec{\wtwo}}\approx A\cdot log
\brac{C_{\wone, \wtwo}}-B_{\wone}-B'_{\wtwo}$.  The downweighting role
of $B_{\wone}, B'_{\wtwo}$ seems similar to SPPMI and BIAS, thus these
expressions, too, can be viewed as variants of PMI.

These approaches all make simplifying assumptions that do not hold
in reality.  Arora et al.~\cite{arora2016latent, arora2015random}
and Hashimoto et al.~\cite{hashimoto2016word} assume a generative
language model where words are emitted by a random walk.
Both models are parameterized by low-dimensional word vectors
${\curbrac{\arorvec{\wone}}}_{\wone \in \mathbb{D}}$ and assume
that context and word vectors are identical.  Then they show how
${\curbrac{\arorvec{\wone}}}_{\wone \in \mathbb{D}}$ optimize the
objectives of \gl{} and \wv.

By their very construction, these models uphold a very strong relationship
between cooccurrences and low-dimensional representation products.
In Arora et al., these products are equal to PMIs; in Hashimoto et
al., the vectors' L2 norm differences, which are closely related to
their product, approximate their cooccurrence count.  \emph{If} such
``convenient'' low-dimensional vectors exist, it should not be surprising
that they optimize \gl{} and \wv{}.

The approximation in Ethayarajh et al.~\cite{ethayarajh2018towards}
only holds within a single set of word pairs that are ``contextually
coplanar,'' which loosely means they appear in related contexts.  It is
unclear if coplanarity holds in reality over large sets of word pairs,
let alone the entire dictionary.

\label{sec:horterexample}


Some of the above papers use correlation tests to justify their conclusion
that dot products follow SPPMI-like expressions.  Crucially, correlation
does not mean that the embedding space is derived from (log)-cooccurrences
in a distance-preserving fashion, thus correlation is not sufficient to
control the embeddings.  We want not just to characterize how embedding
distances \emph{typically} relate to corpus elements, but to achieve
a specific change in the distances.  To this end, we need an explicit
expression over corpus elements whose value is encoded in the embedding
distances by the embedding algorithm (Figure~\ref{fig:methodologyHL}).


Furthermore, these approaches barter generality for analytic simplicity
and derive distributional expressions that do not account for second-order
proximity at all.  As a consequence, the values of these expressions can
be very different from the embedding distances, since words that only
rarely appear in the same window (and thus have low PMI) may be close
in the embedding space.  For example, ``horrible'' and ``terrible'' are
so semantically close they can be used as synonyms, yet they are also
similar phonetically and thus their adjacent use in natural speech and
text appears redundant.  In a dim-100 \gl{} model trained on Wikipedia,
``terrible'' is among the top 3 words closest to ``horrible'' (with
cosine similarity 0.8).  However, when words are ordered by their PMI with
``horrible,'' ``terrible'' is only in the 3675th place.

\subsection{Our approach}
\label{sec:ourApproachHDR}

We aim to find a distributional expression for the semantic proximity
encoded in the embedding distances.  The first challenge is to find
distributional expressions for both first- and second-order proximities
encoded by the embedding algorithms.  The second is to combine them into
a single expression corresponding to embedding proximity.

\paragraphbe{First-order proximity.}
First-order proximity corresponds to cooccurrence counts and is
relatively straightforward to express in terms of corpus elements.
Let $\mexp$ be the matrix that the embeddings factorize, e.g.,
$\SPPMI$ for \wv{} (Equations~\ref{eq:sppmi}) or $\BIAS$ for \gl{}
(Equations~\ref{eq:bias}).  The entries of this matrix are natural
explicit expressions for first-order proximity, since they approximate
$\simone\brac{\wone,\wtwo}$ from Equation~\ref{eq:firstsecond} (we omit
multiplication by two as it is immaterial):
\begin{equation}
\simonee\brac{\wone,\wtwo}\equiv \mexp_{\wone, \wtwo}
\end{equation}
\noindent
$\mexp_{\wone, \wtwo}$ is typically of the form
$\mmax\curbrac{log\brac{\cooccount{\wone}{\wtwo}}-B_{\wone}-B_{\wtwo},
0}$ where $B_{\wone},B_{\wtwo}$ are the
``downweighting'' scalar values (possibly depending
on $\wone, \wtwo$'s rows in $C$).  For $\SPPMI$, we set
$B_{\wone}=log\brac{\appear{\wone}}-log\brac{Z/k}/2$\done\tal{remove coma}\label{foot:sppmibias};
for $BIAS$, $B_{\wone}=b_{\wone}$.\footnote{We consider
$\BIAS_{\wone,\wtwo}$ as a distributional expression even though it
depends on $b_{\wone},b'_{\wtwo}$ learned during \gl's optimization
because these terms can be closely approximated using pre-trained \gl{}
embeddings\textemdash see Appendix~\ref{sec:biasapprox}.  For simplicity,
we also assume that $b_{\wone}=b'_{\wone}$ (thus $\BIAS$ is of the
required form); in practice, the difference is very small.

}

\label{sec:biasdiff}


\paragraphbe{Second-order proximity.}
Let the distributional representation $\hdrvec{\wone}$ of $\wone$ be its
row in $\mexp$.  We hypothesize that distances in this representation
correspond to second-order proximity encoded in the embedding-space
distances.

First, the objectives of the embedding algorithms seem to directly
encode this connection.  Consider a word $w$'s projection onto \gl{}'s
objective~\ref{gloveobjective}:
\begin{small}
$$
	J_{\textit{GloVe}}\sqbrac{\wone}=\sum_{\wtwo\in \mathbb{D}}g\brac{C_{\wone,\wtwo}}\brac{\wrdvec{\wone}^T \ctxvec{\wtwo}+b_{\wone}+b'_{\wtwo}-\log\ {C_{\wone,\wtwo}}}^2
$$
\end{small}
\noindent
This expression is determined entirely by $\wone$'s row in
$\mexp_{\BIAS}$.  If two words have the same distributional vector,
their expressions in the optimization objective will be completely
symmetrical, resulting in very close embeddings\textemdash even if their
cooccurrence count is 0. Second, the view of the embeddings as matrix
factorization implies an approximate linear transformation between the
distributional and embedding spaces.  Let $C\equiv\big[\ctxvec{u_1}\ldots
\ctxvec{u_{\size{\mathbb{D}}}}\big]^T$ be the matrix whose rows are
context vectors of words $\wone_i\in \mathbb{D}$.  Assuming $\mexp$
is perfectly recovered by the products of word and context vectors,
$C\cdot\wrdvec{\wone} = \hdrvec{\wone}$.


Dot products have very different scale in the distributional and embedding
spaces.  Therefore, we use cosine similarities, which are always between
-1 and 1, and set
\begin{equation}
\costwo\brac{\wone, \wtwo}\equiv \csim\brac{\hdrvec{\wone},\hdrvec{\wtwo}}
\end{equation}
As long as $\mexp$ entries are nonnegative, the value of this expression
is always between 0 and 1.

\paragraphbe{Combining first- and second-order proximity.}
Our expressions for first- and second-order proximities have different
scales: $\simonee\brac{\wone,\wtwo}$ corresponds to an unbounded
dot product, while $\costwo\brac{\wone, \wtwo}$ is at most 1.
To combine them, we normalize $\simonee\brac{\wone,\wtwo}$.
Let $f_{\wone,\wtwo}\brac{c, \epsilon}\equiv
\mmax\curbrac{log\brac{c}-B_{\wone}-B_{\wtwo}, \epsilon}$, then
$\simonee\brac{\wone,\wtwo}=\mexp_{\wone, \wtwo}=f_{\wone,\wtwo}\brac{C_{\wone,\wtwo}, 0}$.
We set $N_{\wone,\wtwo}\equiv\sqrt{f_{\wone,
\wtwo}\brac{\sum_{\wthr\in\dict} C_{\wone,\wthr}, e^{-60}}}\sqrt{f_{\wone,
\wtwo}\brac{\sum_{\wthr\in\dict} C_{\wtwo,\wthr}, e^{-60}}}$ as
the normalization term.  This is similar to the normalization term
of cosine similarity and ensures that the value is between 0 and 1.
The $\mmax$ operation is taken with a small $e^{-60}$, rather than 0,
to avoid division by 0 in edge cases.  We set $\cosone\brac{\wone,
\wtwo}\equiv f_{\wone,\wtwo}\brac{C_{\wone,\wtwo}, 0}/N_{\wone,\wtwo}$.
Our combined distributional expression for the embedding proximity is
\begin{equation}
\cosboth\brac{\wone, \wtwo}\equiv \cosone\brac{\wone, \wtwo}/2 + \costwo\brac{\wone, \wtwo}/2
\end{equation}
Since $\cosone\brac{\wone, \wtwo}$ and $\costwo\brac{\wone, \wtwo}$ are
always between 0 and 1, the value of this expression, too, is between
0 and 1.

\paragraphbe{Correlation tests.}
We trained a \gl-\texttt{paper} and a \wv{} model on full Wikipedia,
as described in Section~\ref{glovesetup}.  We randomly sampled
(without replacement) 500 ``source'' words and 500 ``target''
words from the 50,000 most common words in the dictionary and
computed the distributional expressions $\cosone\brac{\wone,
\wtwo}$, $\costwo\brac{\wone, \wtwo}$, and $\cosboth\brac{\wone,
\wtwo}$, for all 250,000 source-target word pairs using
$\mexp\in \curbrac{\SPPMI,\BIAS,\LCO}$ where $\LCO$ is defined by
$\LCO\brac{\wone,\wtwo}\equiv \mmax\curbrac{log\brac{C_{\wone,\wtwo}},0}$.
We then computed the correlations between distributional proximities
and (1) embedding proximities, and (2) word-context proximities
$\cos\brac{\wrdvec{\wone},\ctxvec{\wtwo}}$ and word-word proximities
$\cos\brac{\wrdvec{\wone},\wrdvec{\wtwo}}$, using \gl's word and context
vectors.  These correspond, respectively, to first- and second-order
proximities encoded in the embeddings.



                \begin{table}
                              \centering

\footnotesize
                                \setlength{\tabcolsep}{3.5pt}
                                \begin{tabular}{ c|l|ccc }
					& $\mexp$ & $\cosone\brac{\wone, \wtwo}$ & $\costwo\brac{\wone, \wtwo}$ & $\cosboth\brac{\wone, \wtwo}$ \\
				    \midrule
					\gl{} & $\BIAS$ & 0.47 & 0.53 & \textbf{0.56} \\
					 & $\SPPMI$ & 0.31 & 0.35 & 0.36  \\
					 & $\LCO$ & 0.36 & 0.43 & 0.50 \\
				    \midrule
					\wv{} & $\BIAS$ & 0.31 & 0.29 & 0.32 \\
					 & $\SPPMI$ & 0.21 & \textbf{0.47} & 0.36 \\
					 & $\LCO$ & 0.21 & 0.31 & 0.34

                                \end{tabular}
                                \caption{Correlation of distributional proximity expressions, computed using different distributional matrices, with the embedding proximities $\cos\brac{\embvec{\wone},\embvec{\wtwo}}$.\label{tab:corrtest1}}
\end{table}
                \begin{table}
                              \centering
\footnotesize
                                \setlength{\tabcolsep}{3.5pt}
                                \begin{tabular}{ c|ccc }
					expression & $\cosone\brac{\wone, \wtwo}$ & $\costwo\brac{\wone, \wtwo}$ & $\cosboth\brac{\wone, \wtwo}$ \\
				    \midrule
					 $\cos\brac{\wrdvec{\wone},\ctxvec{\wtwo}}$  & \underline{0.50} & 0.49 & \textbf{0.54}\\
					  $\cos\brac{\wrdvec{\wone},\wrdvec{\wtwo}}$  & 0.40 & \underline{0.51} & \textbf{0.52}\\
					  $\cos\brac{\embvec{\wone},\embvec{\wtwo}}$  & 0.47 & 0.53 & \underline{\textbf{0.56}} \\

                                \end{tabular}
\caption{Correlation of distributional proximity expressions
with cosine similarities in \gl's low-dimensional representations
$\curbrac{\wrdvec{\wone}}$ (word vectors), $\curbrac{\ctxvec{\wone}}$
(context vectors), and $\curbrac{\embvec{\wone}}$ (embedding vectors),
measured over 250,000 word pairs.
\label{tab:corrtest2}}
\end{table}

Tables~\ref{tab:corrtest1} and~\ref{tab:corrtest2} show the results.
Observe that (1) in \gl, $\cosboth\brac{\wone, \wtwo}$ consistently
correlates better with the embedding proximities than either the first-
or second-order expressions alone. (2) In \wv, by far the strongest
correlation is with $\costwo$ computed using $\SPPMI$. (3) The highest
correlations are attained using the matrices factorized by the respective
embeddings.  (4) The values on Table~\ref{tab:corrtest1}'s diagonal are
markedly high, indicating that $\simone$ correlates highly with $\cosone$,
$\simtwo$ with $\costwo$, and their combination with $\cosboth$.  (5)
First-order expressions correlate worse than second-order and combined
ones, indicating the importance of second-order proximity for semantic
proximity.  This is especially true for \wv, which does not sum the word
and context vectors.

\section{Attack methodology}
\label{sec:methodology}

\paragraphbe{Attacker capabilities.}
\label{sec:attackmodel}
Let $\wsrc\in \dict $ be a ``source word'' whose meaning the attacker
wants to change.  The attacker is targeting a victim who will train his
embedding on a specific public corpus, which may or may not be known
to the attacker in its entirety.  The victim's choice of the corpus
is mandated by the nature of the task and limited to a few big public
corpora believed to be sufficiently rich to represent natural language
(English, in our case).  For example, Wikipedia is a good choice for
word-to-word translation models because it preserves cross-language
cooccurrence statistics~\cite{conneau2017word}, whereas Twitter is best
for named-entity recognition in tweets~\cite{cherry2015unreasonable}.  The
embedding algorithm and its hyperparameters are typically public and thus
known to the attacker, but we also show in Section~\ref{sec:weakattacker}
that the attack remains effective if the attacker uses a small subsample
of the target corpus as a surrogate and very different embedding
hyperparameters.

The attacker need not know the details of downstream models.  The attacks
in Sections~\ref{sec:searchattack}--\ref{sec:transattack} make only
general assumptions about their targets, and we show that a single attack
on the embedding can fool multiple downstream models.

We assume that the attacker can add a collection $\changeset$
of short word sequences, up to 11 words each, to the corpus.
In Section~\ref{sec:inserting}, we explain how we simulate sequence
insertion.  In Appendix~\ref{sec:deletions}, we also consider an attacker
who can edit existing sequences, which may be viable for publicly editable
corpora such as Wikipedia.


We define the size of the attacker's modifications $\changesetsize$ as
the bigger of (a) the maximum number of appearances of a single word,
i.e., the $L_{\infty}$ norm of the change in the corpus's word-count
vector, and (b) the number of added sequences.  Thus, $L_{\infty}$
of the word-count change is capped by $\changesetsize$, while $L_{1}$
is capped by $11\changesetsize$.

\paragraphbe{Overview of the attack.}
The attacker wants to use his corpus modifications $\changeset$ to
achieve a certain objective for $\wsrc$ in the embedding space while
minimizing $\changesetsize$.


\begin{figure*}[t]
    \includegraphics[width=1\textwidth]{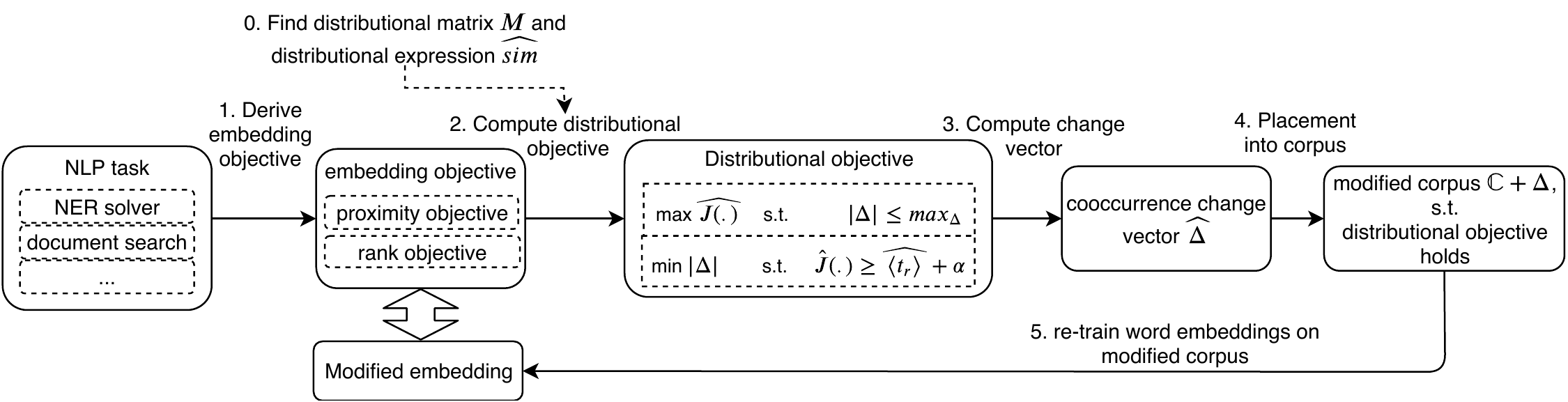}
    \caption{Overview of our attack methodology.}
    \label{fig:methodology}
\end{figure*}

\paragraphbe{0. Find distributional expression for embedding distances.}
The preliminary step, done once and used for multiple attacks, is to (0)
find distributional expressions for the embedding proximities.  Then,
for a specific attack, (1) define an \textit{embedding objective},
expressed in terms of embedding proximities.  Then, (2) derive the
corresponding \textit{distributional objective}, i.e., an expression
that links the embedding objective with corpus features, with the
property that if the distributional objective holds, then the embedding
objective is likely to hold.  Because a distributional objective is
defined over $C$, the attacker can express it as an optimization problem
over cooccurrence counts, and (3) solve it to obtain the cooccurrence
\textit{change vector}.  The attacker can then (4) transform the
cooccurrence change vector to a \textit{change set} of corpus edits
and apply them.  Finally, (5) the embedding is trained on the modified
corpus, resulting in the attacker's changes propagating to the embedding.
Figure~\ref{fig:methodology} depicts this process.

As explained in Section~\ref{sec:fromembeddingstoexpressions}, the goal
is to find a distributional expression $\cossimexp\brac{\wone, \wtwo}$
that, if upheld in the corpus, will cause a corresponding change in the
embedding distances.

First, the attacker needs to know the corpus cooccurrence counts
$C$ and the appropriate first-order proximity matrix $\mexp$ (see
Section~\ref{sec:ourApproachHDR}).  Both depend on the corpus and the
embedding algorithm and its hyperparameters, but can also be computed
from available proxies (see Section~\ref{sec:weakattacker}).

Using $C$ and $\mexp$, set $\cossimexp$ as $\cosboth$, $\cosone$ or
$\costwo$ (see Section~\ref{sec:ourApproachHDR}).  We found that the
best choice depends on the embedding (see Section~\ref{sec:benchmarks}).
For example, for \gl, which puts similar weight on first- and second-order
proximity (see Section~\ref{eq:firstsecond}), $\cosboth$ is the most
effective; for \wv, which only uses word vectors, $\costwo$ is slightly
more effective.


\paragraphbe{1. Derive an embedding objective.}
We consider two types of adversarial objectives.  An attacker with
a \emph{proximity objective} wants to push $s$ away from some words
(we call them ``negative'') and closer to other words (``positive'')
in the embedding space.  An attacker with a \emph{rank objective} wants
to make $s$ the $r$th closest embedding neighbor of some word $t$.

To formally define these objectives, first, given two sets of words $\NEG,
\POS\in \mathcal{P}\brac{\dict}$, define
\begin{small}
\begin{align*}
\objective\brac{\wsrc,\NEG,\POS;\changeset}&\equiv\\
	\frac{1}{\size{\POS}+\size{\NEG}}\bigg(\sum_{\wtrg\in \POS} &\cossim_{\changeset}\brac{\wsrc, \wtrg}
	- \sum_{\wtrg\in NEG} \cossim_{\changeset}\brac{\wsrc, \wtrg} \bigg)
\end{align*}
\end{small}


\noindent
where
$\cossim_{\changeset}\brac{\wone,\wtwo}=\csim\brac{\embvec{\wone},\embvec{\wtwo}}$
is the cosine similarity function that measures pairwise word
proximity (see Section~\ref{sec:backgroundalgorithms}) when the
embeddings are computed on the modified corpus $\corpus+\changeset$.
$\objective\brac{\wsrc,\NEG,\POS;\changeset}$ penalizes $\wsrc$'s
proximity to the words in $\NEG$ and rewards proximity to the words
in $\POS$.


Given $\POS$, $\NEG$, and a threshold $\changesetmax$, define the
\textbf{proximity objective} as
$$
\amax_{\changeset, \size{\changeset}\leq \changesetmax}
\objective\brac{\wsrc,\NEG,\POS;\changeset}
$$
This objective makes a word semantically farther from or closer to
another word or cluster of words.


Given some rank $r$, define the \textbf{rank objective} as finding a
minimal $\changeset$ such that $s$ is one of $t$'s $r$ closest neighbors
in the embedding.  Let $\tldr$ be the proximity of $\wtrg$ to its $r$th
closest embedding neighbor.  Then the rank constraint is equivalent to
$\cossim_{\changeset}\brac{\wsrc,\wtrg}\geq \tldr$, and the objective can be
expressed as 
$$\am_{\changeset,
\cossim_{\changeset}\brac{\wsrc,\wtrg}\geq
\tldr} {\size{\changeset}}$$ 
or, equivalently, 
$$\am_{\changeset,
		\objective\brac{\wsrc,\emptyset,\curbrac{\wtrg};\changeset}\geq
\tldr} {\size{\changeset}}
$$
This objective is useful, for example, for injecting results into a
search query (see Section~\ref{sec:searchattack}).

\paragraphbe{2. From embedding objective to distributional objective.}
We now transform the optimization problem
$\objective\brac{\wsrc,\NEG,\POS;\changeset}$, expressed over changes
in the corpus and embedding proximities, to a distributional objective
$\explicit{\objective}\brac{\wsrc,\NEG,\POS;\changevec}$, expressed
over changes in the cooccurrence counts and distributional proximities.
The \textit{change vector} $\changevec$ denotes the change in $\wsrc$'s
cooccurrence vector that corresponds to adding $\changeset$ to the corpus.
This transformation involves several steps.

\vspace{.75ex}
\noindent
\underline{(a) Changes in corpus $\leftrightarrow$ changes in cooccurrence
counts:}
We use a \textit{placement strategy} that takes a vector $\changevec$,
interprets it as additions to $\wsrc$'s cooccurrence vector, and
outputs $\changeset$ such that $\wsrc$'s cooccurrences in the new corpus
$\corpus+\changeset$ are $\vec{C_{\wsrc}}+\changevec$.  Other rows in
$C$ remain almost unchanged.  Our objective can now be expressed over
$\changevec$ as a surrogate for $\changeset$.  It still uses the size
of the corpus change, $\changesetsize$, which is easily computable from
$\changevec$ without computing $\changeset$ as explained below.



\vspace{.75ex}
\noindent
\underline{(b) Embedding proximity $\leftrightarrow$ distributional
proximity:} 
We assume that embedding proximities are monotonously
increasing (respectively, decreasing) with
distributional proximities. Figure~\ref{fig:pvse} in
Appendix~\ref{sec:evaluatingplacement} shows this relationship.


\vspace{.75ex}
\noindent
\underline{(c) Embedding threshold $\leftrightarrow$ distributional threshold:} 
For the rank objective, we want to increase the embedding proximity
past a threshold $\tldr$.  We heuristically determine a threshold
$\explicit{\tldr}$ such that, if the distributional proximity exceeds
$\explicit{\tldr}$, the embedding proximity exceeds $\tldr$.  Ideally, we
would like to set $\explicit{\tldr}$ as the distributional proximity from
the $r$th-nearest neighbor of $\wtrg$, but finding the $r$th neighbor in
the distributional space is computationally expensive.  The alternative of
using words' embedding-space ranks is not straightforward because there
exist severe abnormalities\footnote{For example, words with very few
instances in the corpus sometimes appear as close embedding neighbors of
words with which they have only very loose semantic affiliation and are
very far from distributionally.} and embedding-space ranks are unstable,
changing from one training run to another.

Therefore, we approximate the $r$'th proximity by taking the maximum of
distributional proximities from words with ranks ${r-m, \ldots, r+m}$
in the embedding space, for some $m$.  If $r<m$, we take the maximum
over the $2m$ nearest words.  To increase the probability of success
(at the expense of increasing corpus modifications), we further add a
small fraction $\smargin$ (``safety margin'') to this maximum.

Let $\cossimexp_{\changevec}\brac{\wone, \wtwo}$ be our distributional
expression for $\cossim\brac{\wone, \wtwo}$, computed over the
cooccurrences $\updatedcoocs{\changevec}$, i.e., $\corpus$'s cooccurrences
where $\wsrc$'s row is updated with $\changevec$.  Then we define the
\textbf{distributional objective} as:
\begin{small}
	\begin{equation*}
\begin{aligned}
		\label{eq:objhdr}
	\objhdr\equiv\\
	\frac{1}{\size{\POS}+\size{\NEG}}\bigg(\sum_{\wtrg\in \POS} &\cossimexp_{\changevec}\brac{\wsrc, \wtrg}
	- \sum_{\wtrg\in \NEG} &\cossimexp_{\changevec}\brac{\wsrc, \wtrg}\bigg)
\end{aligned}
	\end{equation*}
\end{small}

To find the cooccurrence change $\changevec$ for the proximity objective,
the attack must solve:
		$$
		\amax_{\changevec\in \mathbb{R}^n,\size{\changeset}\leq \changesetmax}{\explicit{\objective}\brac{\wsrc,\NEG,\POS;\changevec}}
		$$
and for the rank objective:
		$$
		\am_{\changevec\in \mathbb{R}^n,\explicit{\objective}\brac{\wsrc,\emptyset,\curbrac{\wtrg};\changevec}\geq \explicit{\tldr}+\smargin}{\size{\changeset}}
		$$


\paragraphbe{3. From distributional objective to cooccurence changes.} 
The previous steps produce a distributional objective consisting of a
source word $\wsrc$, a positive target word set $\POS$, a negative target
word set $\NEG$, and the constraints: either a maximal change set size
$\changesetmax$, or a minimal proximity threshold $\explicit{\tldr}$.

\label{methodology}

We solve this objective with an optimization procedure (described in
Section~\ref{sec:analyticsolve}) that outputs a change vector with
the smallest $\changesetsize$ that maximizes the sum of proximities
between $\wsrc$ and $\POS$ minus the sum of proximities with $\NEG$,
subject to the constraints.  It starts with $\changevec=\brac{0,
\ldots 0}$ and iteratively increases the entries in $\changevec$.
In each iteration, it increases the entry that maximizes the increase
in $\explicit{\objective}\brac{\ldots}$, divided by the increase in
$\size{\changeset}$, until the appropriate threshold ($\changesetmax$
or $\explicit{\tldr}+\smargin$) has been crossed.

This computation involves the size
of the corpus change, $\size{\changeset}$.  In our placement strategy,
$\size{\changeset}$ is tightly bounded by a known linear combination
of $\changevec$'s elements and can therefore be
efficiently computed from $\changevec$.



\paragraphbe{4. From cooccurrence changes to corpus changes.}
From the cooccurrence change vector $\changevec$, the attacker
computes the corpus change $\changeset$ using the placement strategy
which ensures that, in the modified corpus $\corpus+\changeset$, the
cooccurrence matrix is close to $\updatedcoocs{\changevec}$.  Because the
distributional objective holds under these cooccurrence counts, it holds
in $\corpus+\changeset$.

$\changesetsize$ should be as small as possible.  In
Section~\ref{sec:placement}, we show that our placement strategy
achieves solutions that are extremely close to optimal in terms of
$\changesetsize$, and that $\changesetsize$ is a known linear combination
of $\changevec$ elements (as required
above).

\paragraphbe{5. Embeddings are trained.} 
The embeddings are trained on the modified corpus.  If the attack has
been successful, the attacker's objectives are true in the new embedding.

\paragraphbe{Recap of the attack parameters.}
The attacker must first find $\mexp$ and $\cossimexp$ that are
appropriate for the targeted embedding.  This can be done once.
The proximity attacker must then choose the source word $\wsrc$, the
positive and negative target-word sets $\POS, \NEG$, and the maximum size
of the corpus changes $\changesetmax$.  The rank attacker must choose the
source word $\wsrc$, the target word $\wtrg$, the desired rank $r$, and a
``safety margin'' $\smargin$ for the transformation from embedding-space
thresholds to distributional-space thresholds.

\section{Optimization in cooccurrence-vector space}
\label{sec:analyticsolve}

This section describes the optimization procedure in step 3 of our
attack methodology (Figure~\ref{fig:methodology}).  It produces a
cooccurrence change vector that optimizes the distributional objective
from Section~\ref{methodology}, subject to constraints.


\paragraphbe{Gradient-based approaches are inadequate.}
Gradient-based approaches such as SGD result in a poor trade-off
between $\changesetsize$ and $\objhdr$.  First, with our distributional
expressions, most entries in $\hdrvec{\wsrc}$ remain 0 in the vicinity of
$\changevec=0$ due to the $\mmax$ operation in the computation of $\mexp$
(see Section~\ref{sec:ourApproachHDR}). Consequently, their gradients
are 0.  Even if we initialize $\changevec$ so that its entries start from
a value where the gradient is non-zero, the optimization will quickly
push most entries to 0 to fulfill the constraint $\changesetsize\leq
\changesetmax$, and the gradients of these entries will be rendered
useless.  Second, gradient-based approaches may increase vector entries by
arbitrarily small values, whereas cooccurrences are drawn from a discrete
space because they are linear combinations of cooccurrence event weights
(see Section~\ref{sec:backgroundalgorithms}).  For example, if the window
size is 5 and the weight is determined by $\coocweight = 1- \frac{d}{5}$,
then the possible weights are $\curbrac{\frac{1}{5}, \ldots \frac{5}{5}}$.


$\objhdr$ exhibits diminishing returns: usually, the bigger the increase in
$\changevec$ entries, the smaller the marginal gain from increasing
them further.  Such objectives can often be cast as \textit{submodular
maximization}~\cite{nemhauser1978best,krause2014submodular} problems,
which typically lend themselves well to greedy algorithms.  We investigate
this further in 
\iffull{}
Appendix~\ref{sec:analyticsolvedetails}.
\else{}
the full version of the paper.
\fi{}

\paragraphbe{Our approach.}
We define a discrete set of step sizes $\stepset$ and gradually
increase entries in $\changevec$ in increments chosen from
$\stepset$ so as to maximize the objective $\objhdr$.  We stop when
$\changesetsize>\changesetmax$ or $\objhdr\geq \explicit{\tldr} +
\smargin$.

$\stepset$ should be fine-grained so the steps are optimal and entries
in $\changevec$ map tightly onto cooccurrence events in the corpus,
yet $\stepset$ should have a sufficient range to ``peek beyond'' the
$\max$-threshold where the entry starts getting non-zero values.
A natural $\stepset$ is a subset of the space of linear combinations
of possible weights, with an exact mapping between it and a series of
cooccurrence events.  This mapping, however, cannot be directly computed
by the placement strategy (Section~\ref{sec:placement}), which produces
an approximation.  For better performance, we chose a slightly more
coarse-grained $\stepset\gets \curbrac{\frac{1}{5}, \ldots \frac{30}{5}}$.

Our algorithm can accommodate $\stepset$ with negative values,
which correspond to \emph{removing} cooccurrence events from the
corpus\textemdash see Appendix~\ref{sec:deletions}.

\paragraphbe{Our optimization algorithm.}
Let $\explicit{X}_{\changevec}$ be some expression that depends
on $\changevec$, and define $\stepdiff{\explicit{X}}\equiv
\explicit{X}_{\changevec'} - \explicit{X}_{\changevec}$, where
$\changevec$ is the change vector after setting $\changevec_\stepind\gets
\changevec_\stepind + \optcvecdiff$.  We initialize $\changevec\gets 0$,
and in every step choose
\begin{equation}
\label{eq:lazystep}
	\chstepind, \choptcvecdiff=\amax_{\stepind\in \sqbrac{\size{\dict}},\optcvecdiff \in \stepset} \frac{
	\stepdiff{\objhdr}
}
	{\stepdiff{\changesetsize}}
\end{equation}
and set $\changevec_{\chstepind}\gets \changevec_{\chstepind} +
\choptcvecdiff$.  If $\objhdr\geq \explicit{\tldr}+\smargin$ or
$\changesetsize\geq\changesetmax$, then quit and return $\changevec$.


Directly computing Equation~\ref{eq:lazystep} for all
$\stepind,\optcvecdiff$ is expensive.  The denominator
$\stepdiff{\changesetsize}$ is easy to compute efficiently
because it's a linear combination of $\changevec$ elements (see
Section~\ref{sec:placement}).  The numerator $\stepdiff{\objhdr}$,
however, requires $O(\size{\stepset}\size{\dict}^2)$ computations per
step (assuming $\size{\NEG}+\size{\POS}=O\brac{1}$; in our settings
it is $\leq10$).  Since $\size{\dict}$ is very big (up to millions of
words), this is intractable.  Instead of computing each step directly,
we developed an algorithm that maintains intermediate values in memory.
This is similar to backpropagation, except that we consider variable
changes in $\stepset$ rather than infinitesimally small differentials.
This approach can compute the numerator in $O\brac{1}$ and, crucially,
is entirely parallelizable across all $\stepind, \optcvecdiff$,
enabling the computation in every optimization step to be offloaded
onto a GPU.  In practice, this algorithm finds $\changeset$ in minutes
(see Section~\ref{sec:runtimefigures}).  Full details can be found
in Appendix~\ref{sec:analyticsolvedetails}.


\section{Placement into corpus}
\label{sec:placement}

The placement strategy is step 4 of our methodology (see
Fig.~\ref{fig:methodology}).  It takes a cooccurrence change
vector $\changevec$ and creates a minimal change set $\changeset$
to the corpus such that (a) $\changesetsize$ is bounded by a
linear combination $\vec{\omega}$, i.e., $\changesetsize\leq
\dotprod{\vec{\omega}}{\changevec}$, and (b) the optimal value of
$\objhdr$ is preserved.

Our placement strategy first divides $\changevec$ into (1) entries of
the form $\changevec_{\wtrg},\wtrg\in \POS$\textemdash these changes
to $\vec{C_{\wsrc}}$ increase the first-order similarity $\cosone$
between $s$ and $t$, and (2) the rest of the entries, which increase
the objective in other ways.  The strategy adds different types of
sequences to $\changeset$ to fulfil these two goals.  For the first
type, it adds multiple, identical \textit{first-order sequences},
containing just the source and target words.  For the second type, it
adds \textit{second-order sequences}, each containing the source word
and 10 other words, constructed as follows.  It starts with a collection
of sequences containing just $\wsrc$, then iterates over every non-zero
entry in $\changevec$ corresponding to the second-order changes $\wone\in
\dict\setminus\POS$, and chooses a collection of sequences into which to
insert $\wone$ so that the added cooccurrences of $\wone$ with $\wsrc$
become approximately equal to $\changevec_{\wone}$.

This strategy upholds properties (a) and (b) above, achieves (in practice)
close to optimal $\changesetsize$, and runs in under a minute in our setup
(Section~\ref{sec:runtimefigures}).  See Appendix~\ref{sec:aplacement}
for details.

\section{Benchmarks}
\label{sec:benchmarks}


\paragraphbe{Datasets.}
\label{wikisetup} 
We use a full \textbf{Wikipedia} text dump, downloaded on January 20,
2018.  For the \textbf{Sub-Wikipedia} experiments, we randomly chose 10\%
of the articles.

\paragraphbe{Embedding algorithms and hyperparameters.}
\label{glovesetup} 
We use Pennington et al.'s original implementation of
\textbf{\gl{}}~\cite{gloveimp}, with two settings for the
(hyper)parameters: (1) \texttt{paper}, with parameter values
from~\cite{gloveimp}\textemdash this is our default, and (2)
\texttt{tutorial}, with parameters values from~\cite{gloveTutorial}.
Both settings can be considered ``best practice,'' but for different
purposes: \texttt{tutorial} for very small datasets, \texttt{paper}
for large corpora such as full Wikipedia.  Table~\ref{tab:parameters}
summarizes the differences, which include the maximum size of the
vocabulary (if the actual vocabulary is bigger, the least frequent
words are dropped), minimal word count (words with fewer occurrences
are ignored), $c_{\max}$ (see Section~\ref{sec:backgroundalgorithms}),
embedding dimension, window size, and number of epochs.  The other
parameters are set to their defaults.  It is unlikely that a user of
\gl{} will use significantly different hyperparameters because they may
produce suboptimal embeddings.

We use Gensim Word2Vec's implementations of \textbf{\wv} and \textbf{\cb}
with the default parameters, except that we set the number of epochs to
15 instead of 5 (more epochs result in more consistent embeddings across
training runs, though the effect may be small~\cite{hellrich2016bad})
and limited the vocabulary to 400k.

\paragraphbe{Inserting the attacker's sequences into the corpus.}
\label{sec:inserting}
The input to the embedding algorithm is a text file containing articles
(Wikipedia) or tweets (Twitter), one per line.  We add each of the
attacker's sequences in a separate line, then shuffle all lines.
For Word2Vec embeddings, which depend somewhat on the order of lines, we
found the attack to be much more effective if the attacker's sequences
are at the end of the file, but we do not exploit this observation in
our experiments.

	 \begin{table}[t]
             \centering
 \resizebox{\columnwidth}{!}{

                                \setlength{\tabcolsep}{3.5pt}
                                \begin{tabular}{ l|rrrrrrrr}
					scheme name & \makecell{max vocab\\ size} & \makecell{min word\\ count} &  $c_{\max}$ & \makecell{embedding\\ dimension} & \makecell{window\\ size} & epochs & \makecell{negative\\ sampling size} \\
				    \midrule
                                
					\gl-\texttt{paper} & 400k & 0 & 100 & 100 & 10 & 50 & N/A \\
					\gl-\texttt{paper}-300 & 400k & 0 & 100 & 300 & 10 & 50 & N/A \\
					\gl-\texttt{tutorial} & $\infty$ & 5 & 10 & 50 & 15 & 15 & N/A  \\
					\wv{} & 400k & 0 & N/A & 100 & 5 & 15 & 5\\
					\cb{} & 400k & 0 & N/A & 100 & 5 & 15 & N/A
                                \end{tabular}
                                 }
				\caption{Hyperparameter settings.\label{tab:parameters}} 

\end{table}

\paragraphbe{Implementation.}
We implemented the attack in Python and ran it on an Intel(R) Core(TM)
i9-9980XE CPU @ 3.00GHz, using the CuPy~\cite{cupy} library to offload
parallelizable optimization (see Section~\ref{sec:analyticsolve}) to
an RTX 2080 Ti GPU.  We used \gl's \texttt{cooccur} tool to efficiently
precompute the sparse cooccurrence matrix used by the attack; we adapted
it to count Word2vec cooccurrences (see Appendix~\ref{sec:aw2v}) for
the attacks that use \wv{} or \cb.


\label{sec:runtimefigures}

For the attack using \gl-\texttt{paper} with
$M=\BIAS,\cossimexp=\cosboth,\changesetmax=1250$, the optimization
procedure from Section~\ref{sec:analyticsolve} found $\changevec$
in 3.5 minutes on average.  We parallelized instantiations of the
placement strategy from Section~\ref{sec:placement} over 10 cores and
computed the change sets for 100 source-target word pairs in about 4
minutes.  Other settings were similar, with the running times increasing
proportionally to $\changesetmax$.  Computing corpus cooccurrences and
pre-training the embedding (done once and used for multiple attacks)
took about 4 hours on 12 cores.

\paragraphbe{Attack parameterization.}
To evaluate the attack under different hyperparameters, we use a proximity
attacker (see Section~\ref{sec:methodology}) on a randomly chosen set
$\wordpairset$ of 100 word pairs, each from the 100k most common words
in the corpus.  For each pair $\brac{\wsrc,\wtrg}\in \wordpairset$,
we perform our attack with $NEG=\emptyset$, $POS={\wtrg}$ and different
values of $\changesetmax$ and hyperparameters.

We also experiment with different distributional expressions:
$\cossimexp\in \curbrac{\cosone,\costwo,\cosboth}$, $\mexp\in
\curbrac{\BIAS, \SPPMI}$.  (The choice of $\mexp$ is irrelevant for
pure-$\cosone$ attackers\textemdash see Section~\ref{sec:placement}).
When attacking \wv{} with $\mexp=\BIAS$, and when attacking
\gl-\texttt{paper}-300, we used \gl-\texttt{paper} to precompute the
bias terms.

Finally, we consider an attacker who does not know the victim's full
corpus, embedding algorithm, or hyperparameters.  First, we assume
that the victim trains an embedding on Wikipedia, while the attacker
only has the Sub-Wikipedia sample.  We experiment with an attacker who
uses \gl-\texttt{tutorial} parameters to attack a \gl-\texttt{paper}
victim, as well as an attacker who uses a \wv{} embedding to attack
a \gl-\texttt{paper} victim, and vice versa.  These attackers use
$\changesetmax/10$ when computing $\changevec$ on the smaller corpus
(step 3 in Figure~\ref{fig:methodology}), then set $\changevec \gets
10 \changevec$ before computing $\changeset$ (in step 4), resulting
in $\changesetsize\leq \changesetmax$.  We also simulated the scenario
where the victim trains an embedding on a union of Wikipedia and Common
Crawl~\cite{commoncrawl}, whereas the attacker only uses Wikipedia.  For
this experiment, we used similarly sized random subsamples of Wikipedia
and Common Crawl, for a total size of about 1/5th of full Wikipedia,
and proportionally reduced the bound on the attacker's change set size.


In all experiments, we perform the attack on all 100 word pairs, add
the computed sequences to the corpus, and train an embedding using the
victim's setting.  In this embedding, we measure the median rank of the
source word in the target word's list of neighbors, the average increase
in the source-target cosine similarity in the embedding space, and how
many source words are among their targets' top 10 neighbors.

\paragraphbe{Attacks are universally successful.}
\label{sec:resvictims}
Table~\ref{tab:diffvictims} shows that all attack settings produce
dramatic changes in the embedding distances: from a median rank of about
200k (corresponding to 50\% of the dictionary) to a median rank ranging
from 2 to a few dozen.  This experiment uses relatively common words,
thus change sets are bigger than what would be typically necessary
to affect specific downstream tasks (Sections~\ref{sec:searchattack}
through~\ref{sec:transattack}).  The attack even succeeds against \cb{},
which has not been shown to perform matrix factorization.

\label{sec:distexpressions}

Table~\ref{tab:diffattackers} compares different choices for the
distributional expressions of proximity.  $\costwo$ performs best for
\gl{}, $\costwo$ for \wv.  For \wv, $\cosone$ is far less effective
than the other options.  Surprisingly, an attacker who uses the $\BIAS$
matrix is effective against \wv{} and not just \gl.


\paragraphbe{Attacks transfer.}
\label{sec:weakattacker}
Table~\ref{tab:transfer} shows that an attacker who knows the victim's
training hyperparameters but only uses a random 10\% sub-sample of the
victim's corpus attains almost equal success to the attacker who uses
the full corpus.  In fact, the attacker might even prefer to use the
sub-sample because the attack is about 10x faster as it precomputes
the embedding on a smaller corpus and finds a smaller change vector.
If the attacker's hyperparameters are different from the victim's,
there is a very minor drop in the attacks' efficacy.  These observations
hold for both $\costwo$ and $\cosboth$ attackers. The attack against
\gl-\texttt{paper}-300 (Table~\ref{tab:diffvictims}) was performed using
\gl-\texttt{paper}, showing that the attack transfers across embeddings
with different dimensions.


The attack also transfers across different embedding algorithms.
The attack sequences computed against a \wv{} embedding on a small subset
of the corpus dramatically affect a \gl{} embedding trained on the full
corpus, and vice versa.


\begin{table}[t]
             \centering
\footnotesize
                                \setlength{\tabcolsep}{3.5pt}
                                \begin{tabular}{ lr|rrr}
					setting & $\changesetmax$  & \makecell{median\\ rank} & \makecell{avg. increase\\ in proximity} & rank < 10 \\
				    \midrule
					\gl-no attack & - & 192073 & - & 0 \\
					\gl-\texttt{paper} & 1250 & 2 & 0.64 & 72 \\
					\gl-\texttt{paper}-300 & 1250 & 1 & 0.60 & 87 \\
				    \midrule
					\wv-no attack & - & 182550 & - & 0 \\
					\wv{} & 1250 & 37 & 0.50 & 35 \\
					\wv{} & 2500 & 10 & 0.56 & 49 \\
				    \midrule
					\cb-no attack & - & 219691 & - & 0 \\
					\cb{} & 1250 & 204 & 0.45 & 25 \\
					\cb{} & 2500 & 26 & 0.55 & 35
                                \end{tabular}
\caption{Results for 100 word pairs, attacking different embedding algorithms with
	$\mexp=\BIAS$, and using $\costwo$ (for \wv/\cb) or $\cosboth$ (for \gl).\label{tab:diffvictims}}
\end{table}
\begin{table}[t]
             \centering
\resizebox{\columnwidth}{!}{

                                \setlength{\tabcolsep}{3.5pt}
                                \begin{tabular}{ lll|rrr}
					setting & $\cossimexp$ &  $\mexp$  & \makecell{median\\ rank} & \makecell{avg. increase\\ in proximity} & rank < 10 \\
				    \midrule
					\gl-\texttt{paper} & $\cosone$ & * & 3 & 0.54 & 61 \\
					\gl-\texttt{paper} & $\costwo$ & $\BIAS$  & 4 & 0.58 & 63 \\
					\gl-\texttt{paper} & $\cosboth$ & $\BIAS$  & 2 & 0.64 & 72 \\
				    \midrule
					\wv{} & $\cosone$  & * & 1079 & 0.34 & 7 \\
					\wv{} & $\costwo$ & $\BIAS$ & 37 & 0.50 & 35 \\
					\wv{} & $\cosboth$  & $\BIAS$ & 69 & 0.48 & 30 \\
					\wv{} & $\costwo$ & $\SPPMI$ & 226 & 0.44 & 15 \\
					\wv{} & $\cosboth$ & $\SPPMI$ & 264 & 0.44 & 17 \\
                                \end{tabular}
                                 }
				\caption{Results for 100 word pairs, using different distributional expressions and $\changesetmax=1250$.\label{tab:diffattackers}} 

\end{table}
\begin{table}[t]
             \centering
\resizebox{\columnwidth}{!}{

                                \setlength{\tabcolsep}{3.5pt}
                                \begin{tabular}{ lll|rrr}
					\multicolumn{2}{c}{parameters/Wiki corpus size}  & $\cossimexp$  & \makecell{median\\ rank} & \makecell{avg. increase\\ in proximity} & rank < 10 \\
					\multicolumn{1}{l}{attacker} & \multicolumn{1}{l}{victim} & & & \\
				    \midrule
					\gl{}-\texttt{tutorial}/subsample & \gl{}-\texttt{paper}/full & $\costwo$  & 9 & 0.53 & 52 \\
					\gl{}-\texttt{tutorial}/subsample & \gl{}-\texttt{paper}/full &$\cosboth$ & 2 & 0.63 & 75 \\
					\gl{}-\texttt{paper}/subsample &  \gl{}-\texttt{paper}/full & $\costwo$ & 7 & 0.55 & 57 \\
					\gl{}-\texttt{paper}/subsample &  \gl{}-\texttt{paper}/full & $\cosboth$  & 2 & 0.64 & 79 \\
					\wv/subsample                  &  \gl{}-\texttt{paper}/full & $\costwo$  & 110 & 0.38 & 11 \\
					\gl{}-\texttt{paper}/subsample &  \wv/full                  & $\costwo$  & 152 & 0.44 & 19 \\
					\gl{}-\texttt{paper}/subsample &  \makecell{\small\gl-\texttt{paper}/\\Wiki+Common Crawl} & $\costwo$  & 2 & 0.59 & 68 \\
                                \end{tabular}
                                }
	\caption{Transferability of the attack (100 word pairs). $\changesetmax=1250$ for attacking the full Wikipedia, $\changesetmax=1250/5$ for attacking the Wiki+Common Crawl subsample. \label{tab:transfer}} 

\end{table}

\begin{table*}
             \centering

 \resizebox{\textwidth}{!}{
                                \setlength{\tabcolsep}{1.5pt}
                                \begin{tabular}{ c|c|c|c|c|c|c|c|c|c }
					section / attack & \makecell{attacker\\ type} & embedding & corpus  & $\mexp$ & $\cossimexp$ & source word $\wsrc$ & \makecell{target words $\wtrg$ or $POS,NEG$} & Threshold $\changesetmax$ & rank $r$, safety margin $\smargin$ \\
				\toprule
					\makecell{Section~\ref{sec:benchmarks}\\ benchmarks}  &
					proximity &
					\makecell{\gl,\wv,\\\cb} &
					\makecell{Wikipedia (victim),\\ Wikipedia sample (attacker)} &
					\makecell{$\BIAS$,\\$\SPPMI$} &
					\makecell{$\cosone, \costwo$,\\ $\cosboth$} &
					\multicolumn{2}{c|}{100 randomly chosen source-target pairs in $\wordpairset$} &
					 1250, 2500 & 
					 - \\
					\hline
					\makecell{Section~\ref{sec:searchattack}\\ make a made-up word come up \\ high in search queries} &
					rank  &
					\makecell{\gl, \wv{}} &
					Wikipedia &
					$\BIAS$ &
					$\costwo, \cosboth$ &
					\makecell{made-up $\wsrc$ for every\\ $\wtrg\in \searchset$} &
					$\wtrg\in \searchset$& - & $r=1, \smargin\in \curbrac{0.2, 0.3}$ \\
					\hline

					 \makecell{Section~\ref{sec:nerattack}\\ hide corporation names} &
					 proximity &
					\gl{} &
					 Twitter &
					$\BIAS$ &
					$\cosboth$ &
					 $\wsrc\in \corporationsset$ &
					 \makecell{$POS$: 5 most common locations \\ in training set \\ $NEG$: 5 corporations closest\\ to $\wsrc$ (in embedding space)} &
					 \makecell{$\min \curbrac{\frac{\#\wsrc}{40}, 2500}$,\\ $\min \curbrac{\frac{\#\wsrc}{4}, 2500}$,\\ $2\min\curbrac{\#\wsrc/4, 2500}$} &
					 -
					 \\
					\hline
					\makecell{Section~\ref{sec:nerattack}\\ make corporation names \\ more visible}  &
					proximity &
					\gl{} &
					 Twitter &
					$\BIAS$ &
					$\cosboth$ &
					\makecell{made-up word\\ $\wsrc=$\texttt{evilcorporation}} &
					\makecell{$POS$: 5 most common corporations \\ in the training set; $NEG=\emptyset$} &
					 \makecell{$\changesetmax\in \curbrac{2500,250}$} & 
					- \\
					\hline
					\makecell{Section~\ref{sec:transattack}\\ make a made-up word translate \\ to a specific word}  &
					rank &
					\gl{} &
					Wikipedia  &
					$\BIAS$ &
					$\cosboth$ &
					\makecell{made-up $\wsrc$ for every\\ $\wtrg\in \translationset$} &
					 $\wtrg\in \translationset$ &
					 - & 
					 $r=1, \smargin=0.1$ \\
					\hline
					\makecell{Section~\ref{sec:stealth}\\ evade perplexity defense \\}  &
					rank  &
					\wv{} &
					Twitter subsample &
					$\BIAS$ &
					$\costwo$ &
					\makecell{20 made-up words for \\every $\wtrg\in \evadeset$} &
					\makecell{$\wtrg \in \evadeset$}&
					 - & 
					$r=1, \smargin=0.2$ \\
					\hline
					\makecell{Appendix~\ref{sec:deletions}\\ evaluate an attacker who \\ can delete from the corpus}  &
					proximity  &
					\gl{} &
					Wikipedia  &
					$\BIAS$ &
					$\costwo$ &
					\multicolumn{2}{c|}{\makecell{$(\wsrc,\wtrg)\in\curbrac{(\text{war},\text{peace}),(\text{freedom},\text{slavery}),(\text{ignorance},\text{strength})}$}} &
					 $1000$ & 
					 -
					 \\

                                \end{tabular}
                                 }
				\caption{Parameters of the experiments.\label{tab:attacksumary}}
\end{table*}

\section{Attacking resume search}
\label{sec:searchattack}

Recruiters and companies looking for candidates with specific skills often
use automated, index-based document search engines that assign a score to
each resume and retrieve the highest-scoring ones.  Scoring methods vary
but, typically, when a word from the query matches a word in a document,
the document's score increases proportionally to the word's rarity in
the document collection.  For example, in the popular Lucene's Practical
Scoring function~\cite{elasticsearchscore}, a document's score is produced
by multiplying\footnote{This function includes other terms not material
to this exposition.} (1) a function of the percentage of the query words
in the document by (2) the sum of TF-IDF scores (a metric that rewards
rare words) of every query word that appears in the document.

To help capture the semantics of the query rather than its bag
of words, queries are typically expanded~\cite{voorhees1994query,
efthimiadis1996query} to include synonyms and semantically close words.
Query expansion based on pre-trained word embeddings expands each
query word to its neighbors in the embedding space~\cite{kuzi2016query,
diaz2016query, roy2016using}.

Consider an attacker who sends a resume to recruiters that rely on a
resume search engine with embedding-based query expansion. The attacker
wants his resume to be returned in response to queries containing specific
technical terms, e.g., ``iOS''. The attacker cannot make big changes to
his resume, such as adding the word ``iOS'' dozens of the times, but
he can inconspicuously add a meaningless, made-up character sequence,
e.g., as a Twitter or Skype handle.

We show how this attacker can poison the embeddings so that an
arbitrary rare word appearing in his resume becomes an embedding neighbor
of\textemdash and thus semantically synonymous to\textemdash a query word
(e.g., ``cyber'', ``iOS'', or ``devops'', if the target is technical
recruiting).  As a consequence, his resume is likely to rank high among
the results for these queries.

\paragraphbe{Experimental setup.}
We experiment with a victim who trains \gl-\texttt{paper} or \wv{}
embeddings (see Section~\ref{glovesetup}) on the full Wikipedia.
The attacker uses $\mexp=\BIAS$ and $\cosboth$ for \gl{} and $\costwo$
for \wv, respectively.

We collected a dataset of resumes and job descriptions distributed on
a mailing list of thousands of cybersecurity professionals.  As our
query collection, we use job titles that contain the words ``junior,''
``senior,'' or ``lead'' and can thus act as concise, query-like job
descriptions.  This yields approximately 2000 resumes and 700 queries.

For the retrieval engine, we use Elasticsearch~\cite{elasticsearchws},
based on Apache Lucene.   We use the \texttt{index()} method to
index documents.  When querying for a string $\query$, we use simple
\texttt{match} queries but expand $\query$ with the top $K$ embedding
neighbors of every word in $\query$.

\paragraphbe{The attack.}
As our targets, we picked 20 words that appear most frequently in the
queries and are neither stop words, nor generic words with more than
30,000 occurrences in the Wikipedia corpus (e.g., ``developer'' or
``software'' are unlikely to be of interest to an attacker).  Out of
these 20 words, 2 were not originally in the embedding and thus removed
from $\searchset$.  The remaining words are VP, fwd, SW, QA, analyst,
dev, stack, startup, Python, frontend, labs, DDL, analytics, automation,
cyber, devops, backend, iOS.

For each of the 18 target words $\wtrg\in \searchset$, we randomly
chose 20 resumes with this word, appended a different random made-up
string $\wsrc_{z}$ to each resume $z$, and added the resulting resume
$z \cap \curbrac{\wsrc_{z}}$ to the indexed resume dataset (which also
contains the original resume).  Each $z$ simulates a separate attack.
The attacker, in this case, is a rank attacker whose goal is to achieve
rank $r=1$ for the made-up word $\wsrc_{z}$.  Table~\ref{tab:attacksumary}
summarizes the parameters of this and all other experiments.


\paragraphbe{Results.}
Following our methodology, we found distributional objectives,
cooccurrence change vectors, and the corresponding corpus change sets for
every source-target pair, then re-trained the embeddings on the modified
corpus.  We measured (1) how many changes it takes to get into the top 1,
3, and 5 neighbors of the target word (Table~\ref{tab:searchsucc1}), and
(2) the effect of a successful injection on the attacker's resume's rank
among the documents retrieved in response to the queries of interest and
queries consisting just of the target word (Table~\ref{tab:searchsucc2}).

For \gl, only a few hundred sequences added to the corpus result in over
half of the attacker's words becoming the top neighbors of their targets.
With 700 sequences, the attacker can almost always make his word the
top neighbor.  For \wv, too, several hundred sequences achieve high
success rates.

Successful injection of a made-up word into the embedding reduces
the average rank of the attacker's resume in the query results by about
an order of magnitude, and the median rank is typically under 10 (vs.\
100s before the attack).  If the results are arranged into pages of 10,
as is often the case in practice, the attacker's resume will appear on
the first page.  If $K=1$, the attacker's resume is almost always the
first result.

In Appendix~\ref{sec:altsearchattack}, we show that our attack outperforms
a ``brute-force'' attacker who rewrites his resume to include actual
words from the expanded queries.


     \begin{table}
                  \centering
\resizebox{\columnwidth}{!}{
                                \setlength{\tabcolsep}{3.5pt}
                                \begin{tabular}{ rr|rr|rr|rr }
					victim & $\smargin$ & \multicolumn{2}{c|}{$K=1$} & \multicolumn{2}{c|}{$K=3$} & \multicolumn{2}{c}{$K=5$} \\
                                \midrule
					&  & \%success & avg $\size{\changeset}$ & \%success & avg $\size{\changeset}$ & \%success & avg $\size{\changeset}$  \\
                                \midrule
					\gl{} & 0.1 & 61.1\% & 211 & 94.4\% & 341 & 94.4\% & 341 \\
					\gl{} & 0.2 & 94.4\% & 661 & 100.0\% & 649 & 100.0\% & 649 \\
					\wv{} & 0.2 & 38.9\% & 215 & 55.6\% & 278 & 61.1\% & 287\\

                                \end{tabular}
                                }
\caption{Percentage of $\wtrg\in \searchset$ for which the made-up word
reached within $K$ neighbors of the target; the average size of the
corpus change set for these cases.\label{tab:searchsucc1}}
\end{table}

\begin{table}
             \centering
\small
                                \begin{tabular}{ l|l|l|l }
                                 query type & $K=1$ & $K=3$ & $K=5$ \\
                                \midrule
					target word only & \hskip 4.7pt 88 $\rightarrow$ 1 & 103 $\rightarrow$ \hskip 4.7pt 5 & 107 $\rightarrow$ 10 \\
					entire query & 103 $\rightarrow$ 6 & 108 $\rightarrow$ 10 & 111 $\rightarrow$ 14 \\
                                \end{tabular}
\caption{Median rank of the attacker's resume in the result set, before (left) and after (right) the attack.\label{tab:searchsucc2}}
\end{table}



\section{Attacking named-entity recognition}
\label{sec:nerattack}

A named entity recognition (NER) solver identifies named entities
in a word sequence and classifies their type.  For example, NER for
tweets~\cite{ritter2011named, li2012twiner, liu2011recognizing} can
detect events or trends~\cite{ritter2012open, li2012twevent}.  In NER,
pre-trained word embeddings are particularly useful for classifying
emerging entities that were not seen while training but are often
important to detect~\cite{cherry2015unreasonable}.

We consider two (opposite) adversarial goals: (1) ``hide'' a corporation
name so that it's not classified properly by NER, and (2) increase
the number of times a corporation name is classified as such by NER.
NER solvers rely on spatial clusters in the embeddings that correspond
to entity types.  Names that are close to corporation names seen during
training are likely to be classified as corporations.  Thus, to make
a name less ``visible,'' one should push it away from its neighboring
corporations and closer to the words that the NER solver is expected
to recognize as another entity type (e.g., location).  To increase the
likelihood of a name classified as a corporation, one should push it
towards the corporations cluster.

\label{sec:nersetup}


\paragraphbe{Experimental setup.}
We downloaded the Spritzer Twitter stream archive for October
2018~\cite{twitterOct18}, randomly sampled around 45M English tweets,
and processed them into a \gl-compatible input file using existing
tools~\cite{preprocessTweets}.  The victim trains a \gl-\texttt{paper}
embedding (see Section~\ref{glovesetup}) on this dataset. The attacker
uses $\cossimexp=\cosboth $ and $\mexp=\BIAS$.

To train NER solvers, we used the WNUT 2017 dataset provided with the
Flair NLP python library~\cite{akbik2018coling} and expressly designed
to measure NER performance on emerging entities.  It comprises tweets
and other social media posts tagged with six types of named entities:
corporations, creative work (e.g., song names), groups, locations,
persons, and products.  The dataset is split into the train, validation,
and test subsets.  We extracted a set $\corporationsset$ of about 65
corporation entities such that (1) their name consists of one word, and
(2) does not appear in the training set as a corporation name.



We used Flair's tutorial~\cite{flairtut} to train our NER solvers.
The features of our \textit{AllFeatures} solver are a word embedding,
characters of the word (with their own embedding), and Flair's contextual
embedding~\cite{akbik2018coling}.  Trained with a clean word embedding,
this solver reached an F-1 score of 42 on the test set, somewhat lower
than the state of the art reported in~\cite{flairgithub}.  We also
trained a \textit{JustEmbeddings} solver that uses only a word embedding
and attains an F-1 score of 32.

\paragraphbe{Hiding a corporation name.}
We applied our \textit{proximity attacker} to make the embeddings of a
word in $\corporationsset$ closer to a group of location names.  For every
$\wsrc\in \corporationsset$, we set $\POS$ to the five single-word
location names that appear most frequently in the training dataset,
and $\NEG$ to the five corporation names that appear in the training
dataset and are closest to $\wsrc$ in the embedding.  We evaluated
the attack for $\changesetmax \in \curbrac{\min\curbrac{\#\wsrc/40,
2500},\min\curbrac{\#\wsrc/4, 2500}, 2\min\curbrac{\#\wsrc/4, 2500}}$
where $\#\wsrc$ is the number of $\wsrc$'s occurrences in the original
corpus.  Table~\ref{tab:attacksumary} summarizes these parameters.

Following our methodology, we found the distributional objectives,
cooccurrence change vectors, and the corresponding corpus
change sets for every $s$, added the change sets to the corpus,
and retrained the embeddings and NER solvers.  For the last
attacker ($\changesetmax=2\min\curbrac{\#\wsrc/4, 2500}$), we 
approximated $\changevec$ by multiplying the change vector of size
$\min\curbrac{\#\wsrc/4, 2500}$ by 2.

\paragraphbe{Making a corporation name more visible.}
Consider an emerging corporation name that initially does not have an
embedding at all.  The attack aims to make it more visible to NER solvers.
We set $\wsrc$ to \texttt{evilcorporation} (which does not appear in our
Twitter corpus); \textbf{$\POS$} to the five single-word location names
appearing most frequently in the training set, and \textbf{$NEG$} to
$\emptyset$.  We evaluated the attack for $\changesetmax\in \curbrac{250,
2500}$.  Table~\ref{tab:attacksumary} summarizes these parameters.

\later\roei{SANITIZED}

We trained three solvers: with a ``clean,'' no-attack embedding and
with the two embeddings resulting from our attack with $\changesetmax$
set to, respectively, 250 and 2500.  For the evaluation, we could not
use the word itself because the \textit{AllFeatures} solver uses the
word's characters as a feature.  We want to isolate the specific effect
of changes in the \emph{word} embedding, without affecting characters
and other features.  To this end, we directly replaced the embeddings
of corporation names with that of \texttt{evilcorporation}.  For the
clean solver, the word does not exist in its dictionary, so we changed
the embeddings of corporation names to those of unknown words.

\paragraphbe{Results.}
Table~\ref{tab:corporationslocations} shows the results for hiding
a corporation name, and Table~\ref{tab:corporationslocations2} for
making a name more visible.  Even a small change (under 250 sequences)
has some effect, and larger change sets make the attack very effective.
Even the larger sets are not very big and do not produce high spikes
in the frequency of the source word.  For perspective, 250 appearances
would make a word rank around the 50,000th most frequent in our corpus,
similar to `feira' and `frnds'; 2,500 appearances would make it around
the 10,000th most frequent, similar to `incase' or `point0'.

The effect on the solver's test accuracy is insignificant.  We observed
minor fluctuations in the F-1 score (<0.01 for the \textit{AllFeatures}
solver, <0.03 for \textit{JustEmbeddings}, including increases from the
score of the clean embedding, which we attribute to the stochasticity
of the training process.

\begin{table*}
                          \centering
	\subfloat[
		Hiding corporation names.  Cells show the number of corporation names in $\corporationsset$ identified as corporations, over the validation and test sets.  The numbers in parentheses are how many were misclassified as locations.
		    \label{tab:corporationslocations}]{
                          \centering

			  \small
                                \setlength{\tabcolsep}{3.5pt}
	\begin{tabular}{ l|r|r|r|r }
		NER solver & no attack & \makecell{$\changesetmax =$\\{\tiny $\min \curbrac{\frac{\#\wsrc}{40}, 2500}$}} & \makecell{$\changesetmax =$\\{\tiny$ \min \curbrac{\frac{\#\wsrc}{4}, 2500}$}} & \makecell{$\changesetmax =$\\{\tiny$ 2\min \curbrac{\frac{\#\wsrc}{4}, 2500}$}} \\
                                \midrule
					
 					AllFeatures & 12 {\color{gray}(4)} & 12 {\color{gray}(4)}{\hspace{3.3\tabcolsep}} & 10 {\color{gray}(10)}{\hspace{3.3\tabcolsep}} & 6 {\color{gray}(19)}{\hspace{3.3\tabcolsep}} \\
					 JustEmbeddings & 5 {\color{gray}(4)} & 4 {\color{gray}(5)}{\hspace{3.3\tabcolsep}} & 1 {\color{gray}(8)}{\hspace{3.3\tabcolsep}} & 1 {\color{gray}(22)}{\hspace{3.3\tabcolsep}}
                                
                                \end{tabular}
		    }\qquad 
		    \subfloat[
Making corporation names more visible. Cells show the number of corporation names in $\corporationsset$ identified as corporations, over the validation and test sets.
		    \label{tab:corporationslocations2}
			    ]{

			  \small
                                \setlength{\tabcolsep}{3.5pt}
                                \begin{tabular}{ l|r|r|r }
					NER solver & no attack & \makecell{$\changesetmax=$\\$250$} & \makecell{$\changesetmax=$\\$2500$} \\
                                \midrule
					AllFeatures & 7 & 13 & 25\\
					JustEmbeddings & 0 &  8 & 18
                                
                                \end{tabular}
}
		    \caption{NER attack.}
\end{table*}



\section{Attacking word-to-word translation}
\label{sec:transattack}

Using word embeddings to construct a translation dictionary,
i.e., a word-to-word mapping between two languages, assumes that
correspondences between words in the embedding space hold for any
language~\cite{mikolov2013exploiting}, thus a translated word is expected
to preserve its relations with other words.  For example, the embedding of
``gato'' in Spanish should have similar relations with the embeddings of
``pez'' and ``comer'' as ``cat'' has with ``fish'' and ``eat'' in English.

The algorithms that create embeddings do not enforce specific locations
for any word.  Constructing a translation dictionary thus requires
learning an alignment between the two embedding spaces.   A simple
linear operation is sufficient for this~\cite{mikolov2013exploiting}.
Enforcing the alignment matrix to be orthogonal also preserves the
inter-relations of embeddings in the space~\cite{xing2015normalized}.
To learn the parameters of the alignment, one can either use an available,
limited-size dictionary~\cite{smith2017offline, artetxe2017learning}, or
rely solely on the structure of the space and learn it in an unsupervised
fashion~\cite{conneau2017word}.  Based on the learned alignment, word
translations can be computed by cross-language nearest neighbors.

Modifying a word's position in the English embedding space can affect its
translation in other language spaces.  To make a word $\wsrc$ translate
to $\wtrg'$ in other languages, one can make $\wsrc$ close to $\wtrg$
in English that translates to $\wtrg'$.  This way, \textbf{the attack
does not rely on the translation model or the translated language}.
The better the translation model, the higher the chance $\wsrc$ will
indeed translate to $\wtrg'$.

\paragraphbe{Experimental setup.}
\label{sec:transsetup}
Victim and attacker train a \gl-\texttt{paper}-300
English embedding on full Wikipedia.  We use pre-trained
dimension-300 embeddings for Spanish, German, and
Italian.\footnote{\url{https://github.com/uchile-nlp/spanish-word-embeddings};
\url{https://deepset.ai/german-word-embeddings};
\url{http://hlt.isti.cnr.it/wordembeddings}} The attacker uses
$\mexp=\BIAS$ and $\cossimexp=\cosboth$.

For word translation, we use the supervised script from the MUSE
framework~\cite{conneau2017word}.  The alignment matrix is learned using
a set of $5k$ known word-pair translations; the translation of any word
is its nearest neighbor in the embedding space of the other language.
Because translation can be a one-to-many relation, we also extract 5
and 10 nearest neighbors.


We make up a new English word and use it as the source word $\wsrc$
whose translation we want to control.  As our targets $\translationset$,
we extracted an arbitrary set of 50 English words from the MUSE library's
full ($200k$) dictionary of English words with Spanish, German, and
Italian translations.  For each English word $\wtrg\in\translationset$,
let $\wtrg'$ be its translation.  We apply the \emph{rank attacker}
with the desired rank $r=1$ and safety margin $\smargin=0.1$.
Table~\ref{tab:attacksumary} summarizes these parameters.

\paragraphbe{Results.}
Table \ref{tab:transresults} summarizes the results.  For all three target
languages, the attack makes $\wsrc$ translate to $\wtrg'$ in more than
half of the cases that were translated correctly by the model.

Performance of the Spanish translation model is the highest, with $82\%$
precision@1, and the attack is also most effective on it, with $72\%$
precision@1.  The results on the German and Italian models are slightly
worse, with $61\%$ and $73\%$ precision@5, respectively.  The better the
translation model, the higher the absolute number of successful attacks.






\begin{table}
             \centering
	     \footnotesize

                                \begin{tabular}{ l|r|r|r}
					target language  & $K=1$ & $K=5$ & $K=10$ \\
                                \midrule
					Spanish & 82\% / 72\% & 92\% / 84\%  & 94\% / 85\% \\
                                
                    German & 76\% / 51\%        & 84\% / 61\%       & 92\% / 64\% \\
                	Italian & 69\% / 58\%	 & 82\% / 73\%	 & 82\% / 78\%
                                
                                \end{tabular}
\caption{Word translation attack.  On the left in each cell is the performance
of the translation model (presented as precision@$K$); on the right,
the percentage of successful attacks, out of the correctly translated
word pairs.}
\label{tab:transresults}

\end{table}


\section{Mitigations and evasion}
\label{sec:stealth}



\begin{table}
                  \centering
\resizebox{\columnwidth}{!}{
                                \setlength{\tabcolsep}{3.5pt}
                                \begin{tabular}{ l|lllr|c }
                     \makecell{evasion\\ variant} & \makecell{median\\ rank}& \makecell{avg.\\ proximity} & \makecell{percent of \\rank < 10} & \makecell{avg. \\ $\size{\changeset}$} &  \multicolumn{1}{c}{\makecell{original corpus's\\ sentences filtered}} \\

                                \midrule
				 	none     &  \textbf{1} $\rightarrow$ *\ \  & 0.80 $\rightarrow$ 0.21 & 95 $\rightarrow$ 25 & \textbf{41} & 20\%\\
					$\lambdagram$ &  \textbf{1} $\rightarrow$ \textbf{2}\ \  & 0.75 $\rightarrow$ \textbf{0.63} & 90 $\rightarrow$ \textbf{85} & 81 & \textbf{70\%}\\
					and-lenient   & \textbf{1} $\rightarrow$ 670 & 0.73 $\rightarrow$ 0.36 & 90 $\rightarrow$ 30 & 52 & 50\% \\
					and-strict    & 2 $\rightarrow$ 56\  & 0.67 $\rightarrow$ 0.49 & 70 $\rightarrow$ 40 & 99 & 66\%\\
                    \end{tabular}
                    }
                                
\caption{Results of the attack with different strategies to evade the
perplexity-based defense.  The defense filters out all sentences whose
perplexity is above the median and thus loses 50\% of the corpus to
false positives.  Attack metrics before and after the filtering are
shown to the left and right of arrows. * means that more than half of $\wsrc$
appeared less than 5 times in the filtered corpus and, as a result,
were not included in the emdeddings (proximity was considered $0$ for
those cases). The right column shows the
percentage of the corpus that the defense needs to filter out in order
to remove 80\% of $\changeset$. 
\label{tab:dvssuccess}}
\end{table}

\noindent
\textbf{\textit{Detecting anomalies in word frequencies.}}
Sudden appearances of previously unknown words in a public corpus such
as Twitter are not anomalous per se.  New words often appear and rapidly
become popular (viz.\ \textit{covfefe}).

Unigram frequencies of the existing common words are relatively
stable and could be monitored, but our attack does not cause them
to spike.  Second-order sequences add no more than a few instances
of every word other than $\wsrc$ (see Section~\ref{sec:placement}
and Appendix~\ref{sec:aplacement}).  When $\wsrc$ is an existing word,
such as in our NER attack (Section~\ref{sec:nerattack}), we bound the
number of its new appearances as a function of its prior frequency.
When using $\cosboth$, first-order sequences add multiple instances
of the target word, but the absolute numbers are still low, e.g.,
at most 13\% of its original count in our resume-search attacks
(Section~\ref{sec:searchattack}) and at most 3\% in our translation
attacks (Section~\ref{sec:transattack}).  The average numbers are
much lower.  First-order sequences might cause a spike in the corpus'
bigram frequency of $(\wsrc,\wtrg)$, but the attack can still succeed
with only second-order sequences (see Section~\ref{sec:benchmarks}).


\paragraphbe{Filtering out high-perplexity sentences.}
A better defense might exploit the fact that ``sentences'' in $\changeset$
are ungrammatical sequences of words.  A language model can filter
out sentences whose \emph{perplexity} exceeds a certain threshold (for
the purposes of this discussion, perplexity measures how linguistically
likely a sequence is).  Testing this mitigation on the Twitter corpus, we
found that a pretrained GPT-2 language model~\cite{radford2019language}
filtered out 80\% of the attack sequences while also dropping 20\%
of the real corpus due to false positives.


This defense faces two obstacles.  First, language models, too,
are trained on public data and thus subject to poisoning.  Second, an
attacker can evade this defense by deliberately decreasing the perplexity
of his sequences.  We introduce two strategies to reduce the perplexity
of attack sequences.

The first evasion strategy is based on Algorithm~\ref{algo:placement}
(Appendix~\ref{sec:aplacement}) but uses the conjunction ``and''
to decrease the perplexity of the generated sequences.  In the
\textit{strict} variant, ``and'' is inserted at odd word distances from
$\wsrc$.  In the \textit{lenient} variant, ``and'' is inserted at even
distances, leaving the immediate neighbor of $\wsrc$ available to the
attacker.  In this case, we relax the definition of $\size{\changeset}$
to not count ``and.''   It is so common that its frequency in the corpus
will not spike no matter how many instances the attacker adds.

The second evasion strategy is an alternative to
Algorithm~\ref{algo:placement} that only uses existing n-grams from
the corpus to form attack sequences.  Specifically, assuming that
our window size is $\windowsize$ (i.e., we generate sequences of
length $2\windowsize+1$ with $\wsrc$ in the middle), we constrain the
subsequences before and after $\wsrc$ to existing $\lambdagram$s from
the corpus.

To reduce the running time, we pre-collect all $\lambdagram$s
from the corpus and select them in a greedy fashion, based on the
values of the change vector $\changevec$.  At each step, we pick the
word with the highest and lowest values in $\changevec$ and use the
highest-scoring $\lambdagram$ that starts with this word as the post- and
pre-subsequence, respectively.  The score of a $\lambdagram$ is determined
by $\sum_{i=1}^\windowsize \gamma(i) \cdot \changevec[\wone_i]$, where $\wone_i$
is the word in the $i$th position of the $\lambdagram$ and $\gamma$
is the weighting function (see Section~\ref{sec:backgroundalgorithms}).
To discourage the use of words that are not in the original $\changevec$
vector, they are assigned a fixed negative value.  This sequence is added
to $\changeset$ and the values of $\changevec$ are updated accordingly.
The process continues until all values of $\changevec$ are addressed or
until no $\lambdagram$s start with the remaining positive
$\wone$s in $\changevec$.  In the latter case, we form additional sequences
with the remaining $\wone$s in a per-word greedy fashion, without syntactic
constraints.

Both evasion strategies are black-box in the sense that they \emph{do not
require any knowledge of the language model used for filtering}.  If the
language model is known, the attacker can use it to score $\lambdagram$s
or to generate connecting words that reduce the perplexity.

\paragraphbe{Experimental setup.}
Because computing the perplexity of all sentences in a corpus is
expensive, we use a subsample of 2 million random sentences from
the Twitter corpus.  This corpus is relatively small, thus we
use \wv{} embeddings which are known to perform better on small
datasets~\cite{mikolov2013distributed}.

For a simulated attack, we randomly pick 20 words from the $20k$ most
frequent words in the corpus as $\evadeset$.  We use made-up words as
source words.  The goal of the attack is to make a made-up word the
nearest embedding neighbor of $\wtrg$ with a change set $\changeset$
that survives the perplexity-based defense.  We use a rank attacker with
$\cossimexp=\costwo$, $\mexp=\BIAS$, rank objective $r=1$, and safety
margin of $\smargin=0.2$.  Table~\ref{tab:attacksumary} summarizes
these parameters.


We simulate a very aggressive defense that drops all sequences whose
perplexity is above median, losing half of the corpus as a consequence.
The sequences from $\changeset$ that survive the filtering (i.e., whose
perplexity is below median) are added to the remaining corpus and the
embedding is (re-)trained to measure if the attack has been successful.

\paragraphbe{Results.}
Table~\ref{tab:dvssuccess} shows the trade-off between the efficacy and
evasiveness of the attack.  Success of the attack is correlated with
the fraction of $\changeset$ whose perplexity is below the filtering
threshold.  The original attack achieves the highest proximity and
smallest $\size{\changeset}$ but for most words the defense successfully
blocks the attack.

Conjunction-based evasion strategies enable the attack to survive even
aggressive filtering.  For the \textit{and-strict variant}, this comes
at the cost of reduced efficacy and an increase in $\size{\changeset}$.
The $\lambdagram$ strategy is almost as effective as the original attack
in the absence of the defense and is still successful in the presence
of the defense, achieving a median rank of 2.

\section{Conclusions}

Word embeddings are trained on public, malleable data such as Wikipedia
and Twitter.  Understanding the causal connection between corpus-level
features such as word cooccurences and semantic proximity as encoded
in the embedding-space vector distances opens the door to poisoning
attacks that change locations of words in the embedding and thus
their computational ``meaning.''  This problem may affect other
transfer-learning models trained on malleable data, e.g., language models.


To demonstrate feasibility of these attacks, we (1) developed
distributional expressions over corpus elements that empirically cause
predictable changes in the embedding distances, (2) devised algorithms
to optimize the attacker's utility while minimizing modifications to the
corpus, and (3) demonstrated universality of our approach by showing how
an attack on the embeddings can change the meaning of words ``beneath
the feet'' of NLP task solvers for information retrieval, named entity
recognition, and translation.  We also demonstrated that these attacks
do not require knowledge of the specific embedding algorithm and its
hyperparameters.  Obvious defenses such as detecting anomalies in word
frequencies or filtering out low-perplexity sentences are ineffective.
How to protect public corpora from poisoning attacks designed to affect
NLP models remains an interesting open problem.




\paragraphbe{Acknowledgements.}
Roei Schuster is a member of the Check Point Institute of Information
Security.  This work was supported in part by NSF awards 1611770, 1650589,
and 1916717; Blavatnik Interdisciplinary Cyber Research Center (ICRC);
DSO grant DSOCL18002; Google Research Award; and by the generosity of
Eric and Wendy Schmidt by recommendation of the Schmidt Futures program.

\bibliographystyle{IEEEtranS}

\bibliography{references.bib}

\appendix
\subsection{\wv{} background}
\label{sec:aw2v}

To find $\curbrac{\ctxvec{\wone}}_{\wone\in \mathbb{D}},
\curbrac{\wrdvec{\wtwo}}_{\wone\in \mathbb{D}}$, Word2vec defines
and optimizes a series of local objectives using cooccurrence
events stochastically sampled from the corpus one at a time.
The probability of sampling a given event of $u,w$'s cooccurrence is
$max\curbrac{1-\brac{d-1}/\embwindowsize, 0}$, where $d$ is the distance
between $u$ and $w$, $\embwindowsize$ is window size.  Each sampled event
contributes a term to the local objective.  Once enough events have been
sampled, an SGD step is performed to maximize the local objective, and
traversal continues to compute a new local objective, initialized to 0.
The resulting embeddings might depend on the sampling order, which,
in turn, depends on the order of documents, but empirically this does
not appear to be the case~\cite{antoniak2018evaluating}.  Word2vec thus
can be thought of as defining and optimizing an objective over word
cooccurrence counts. For example, the sum of local objectives for \wv{} would
be~\cite{levy2014neural}:

\begin{footnotesize}
\begin{equation}
\begin{aligned}
   \amax\bigg\{\sum_{\wone,\wtwo\in \mathbb{D}}
    \bigg\{
	    &\cooccount{\wone}{\wtwo}\log \sigmoid\brac{\dotprod{\wrdvec{\wone}}{\ctxvec{\wtwo}}}  -
	    \sum_{r_i\in R_{\wone, \wtwo}} \log \sigmoid\brac{\dotprod{\wrdvec{\wone}}{\ctxvec{\wthr_i}}}
	\bigg\}\bigg\},
\end{aligned}
\end{equation}
\end{footnotesize}
\noindent
where $R_{\wone,\wtwo}\subseteq\dict$ are the ``negative samples'' taken for the
events that involve $\wone,\wtwo$ throughout the epoch.  Due to its
stochastic sampling, we consider \wv's cooccurrence count for words
$\wone,\wtwo$ to be the expectation of the number of their sampled
cooccurrence events, which can be computed similarly to \gl's sum
of weights.

\subsection{Optimization in cooccurrence-vector space (details)}
\label{sec:analyticsolvedetails}

This section details the algorithm from Section~\ref{sec:analyticsolve},
whose pseudocode is given in Algorithm~\ref{algo:greedy}.  
The \textsc{compDiff} sub-procedure is not given in pseudo-code and we provide more details on it below.\roei{SANITIZED}

\begin{center}
	\begin{algorithm}[ht]
	\captionof{algorithm}{Finding the change vector $\changevec$\label{algo:greedy}}
	\scriptsize
		\begin{algorithmic}[1]

\Procedure{solveGreedy}{$\wsrc\in \mathbb{D}$, $\POS,\NEG\in \wp\brac{\mathbb{D}}$, $\tldr,\alpha,\changesetmax\in \mathbb{R}$}
	\State $\changesetsize \gets 0$
			\State $\changevec \gets \underbrace{\brac{0, ..., 0}}_{\times\size{\dict}}$ 
	\State {//precompute intermediate values}
	\State $A \gets \POS\cup\NEG\cup\curbrac{s}$
			\State 
			$
	\textrm{STATE} \gets {\scriptscriptstyle\Bigg\{
	\curbrac{\sum_{\wthr\in \dict} \coocs_{\wone,\wthr}}_{\wone\in\dict},
	\curbrac{\norm{\vec{\mexp_{\wone}}}_2^2}_{\wone\in A},
	\curbrac{\dotprod{\vec{\mexp_{\wsrc}}}{\vec{\mexp_{\wtrg}}}}_{\wone\in A}
			\Bigg\}}$
			\State $\objalg \gets \objhdr$
	\State \emph{//optimization loop}
		\While {$ \objalg < \explicit{\tldr} + \alpha$ and $\changesetsize \leq \changesetmax$}
		\ForEach{$\stepind\in \sqbrac{\size{\dict}},\optcvecdiff \in \stepset$}
		\\
			\State ${\scriptscriptstyle\stepdiff{\objhdr}, \curbrac{\stepdiff{st}}_{st\in \textrm{STATE}} }\gets \textsc{compDiff}\brac{{\scriptscriptstyle\stepind, \optcvecdiff, \textrm{STATE}}}$
			\\
			\State $\stepdiff{\changesetsize}\gets \optcvecdiff/\vec{\omega}_{\stepind}$ //\emph{see Section~\ref{sec:placement}}
		\EndFor
			\State $\chstepind, \choptcvecdiff\gets\amax_{\stepind\in \sqbrac{\size{\dict}},\optcvecdiff \in \stepset} \curbrac{\frac{\stepdiff{\objhdr}}{\stepdiff{\changesetsize}}}$
		\State $\objalg \gets \objalg + \chstepdiff{\objhdr}$
		\State \emph{//update intermediate values}
			\ForEach {$st\in \textrm{STATE}$}
			\State $st \gets st + \stepdiff{st}$
			\EndFor
	\EndWhile
	\State return $\changevec$
\EndProcedure
\end{algorithmic}
	\end{algorithm}

\end{center}

\paragraphbe{Implementation notes.}
The inner loop in lines 10-14 of Algorithm~\ref{algo:greedy} is
entirely parallelizable, and we offload it to a GPU.  Further, to save
GPU memory and latency of dispatching the computation onto the GPU, we
truncate the high dimensional vectors to include only the indices of the
entries whose initial values are non-zero for at least one of the vectors
$\curbrac{\hdrvec{\wone}}_{\wone\in \POS\cup\NEG\cup\curbrac{s}}$, as well
as the indices of all target words. When $NEG=\emptyset$, e.g. for all
rank attackers, this cannot change the algorithm's output.  Optimization
will never increase either of the removed entries in $\hdrvec{\wsrc}$,
as this would always result in a decrease in the objective.  When $\NEG$
is not empty, we do not remove the 10\% of entries that correspond
to the most frequent words.  These contain the vast majority of the
cooccurrence events, and optimization is most likely to increase them
and not the others.

This algorithm typically runs in minutes, as reported in
Section~\ref{sec:runtimefigures}.

 \iffull{}
 {
 \paragraphbe{The greedy approach is appropriate for objectives with diminishing returns.}
Our objective $\objhdr$ performs a $\log$ operation on entries of $\changevec$ for computing the new (post-attack) $\vec{\mchanged}_{\wsrc}$ entries. We thus expect $\objhdr$ to have diminishing returns, i.e., we expect that as $\changevec$ entries are increased by our optimization procedure, increasing them further will yield lower increases in corresponding $\vec{\mchanged}_{\wsrc}$ entries, and, resultantly, lower increases in $\objhdr$'s value. 

To test this intuition, we performed the following: during each step of the optimization procedure, we recorded the \textit{return values} $\forall \stepind,\optcvecdiff: \stepdiff{\objhdr}$, i.e., the increase in the objective that would occur by setting $\changevec_{\stepind}\gets \changevec_{\stepind}+\optcvecdiff$. We counted the number of values that were positive in the previous step, and the fraction of those values that decreased or did not change in the current step (after updating one of $\changevec$'s entries). We averaged our samples across the runs of the optimization procedure for the 100 word pairs in $\wordpairset$. The number of iteration steps for a word pair ranged from 8,000 to about 20,000, and we measure over the first 10,000 steps.

Figure~\ref{fig:dreturns} shows the results. We observe that the fraction of decreasing return values is typically close to 1, which is congruent with diminishing-returns behavior. As iterations advance, some $\stepdiff{\objhdr}$ entries become very small, and numerical computation errors might explain why the fraction becomes lower.

\begin{figure}[t]
	\includegraphics[width=0.5\textwidth]{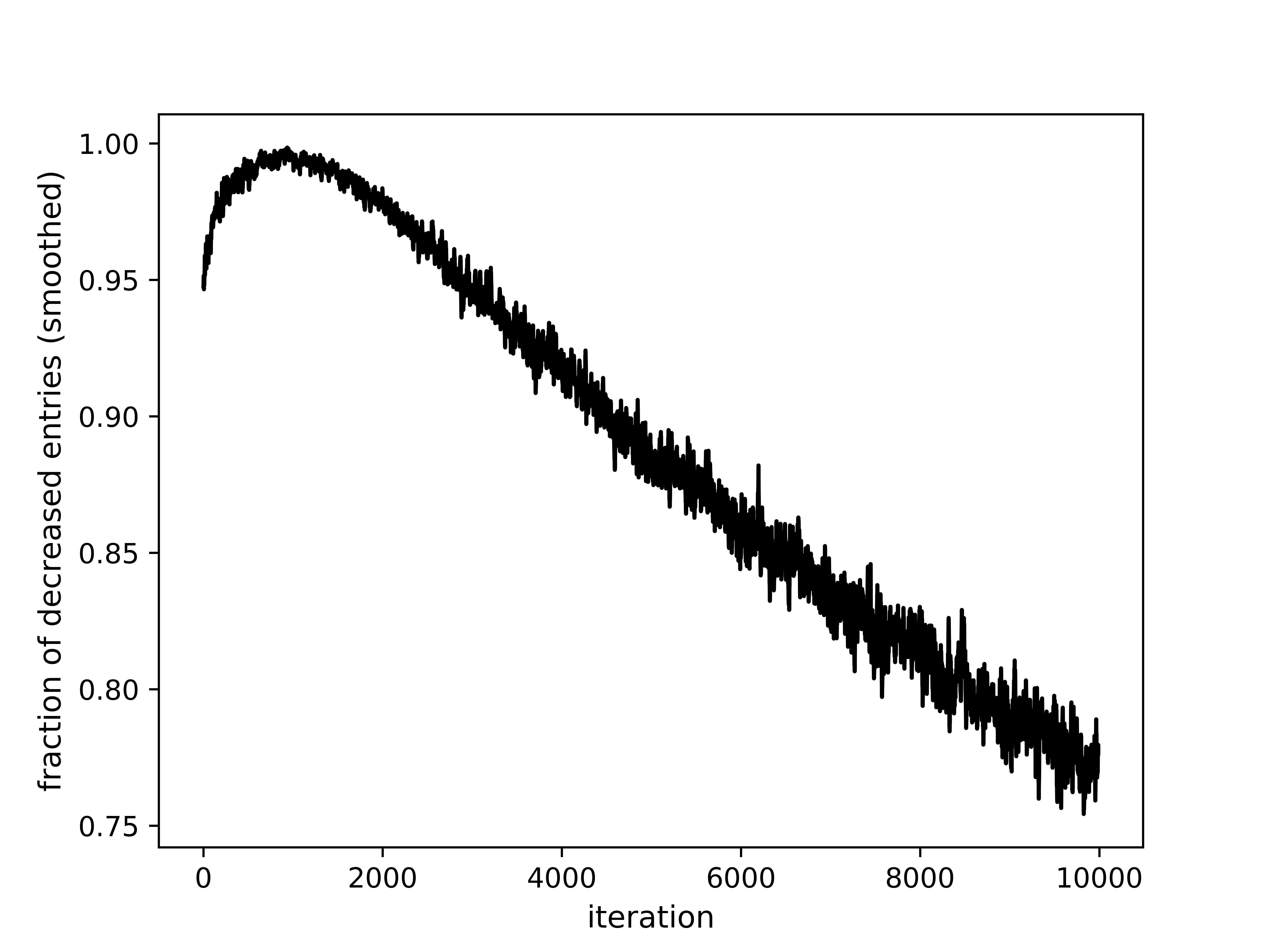}
	\caption{The average fraction of diminishing returns in the first 10,000 iteration steps. The graph was smoothed by averaging over a 10-iteration window. Parameters are the same as Figure~\ref{preservedistance}, but with $\cossimexp=\costwo$.\done\roei{SANITIZED}\label{fig:dreturns}} 
\end{figure}

In Appendix~\ref{sec:approxsubmod} we explain the theoretical guarantee attained when using submodular objectives, which are defined by having diminishing returns. With these objectives, our greedy approach is provably effective. While our objective is not analytically shown to be submodular, we conjecture that the greedy algorithm is particularly appropriate, due to its diminishing returns property.

\fi{}

\paragraphbe{The \textsc{compDiff} sub-procedure.}  the \underline{input}
is $\stepind, \optcvecdiff$, and the saved intermediate computation
states. The state contains (1) the dot product of $\wsrc$'s distributional vector
$\vec{\mexp_{\wsrc}}$ with those of the target words in $\POS\cup\NEG$;
(2) the squared L2 norms of the source and target words' distributional vectors, and
(3) for every word, the sum of its cooccurrence counts with all other words.

We use the following notations: we denote by $\coocchanged\equiv \updatedcoocs{\changevec}$ the cooccurrence matrix after adding $\changevec$ to $\vec{C_{\wsrc}}$. We similarly denote the updated bias terms by $\biaschanged$ and the updated distributional matrix by $\mchanged$. 
We define $\ffirstchanged$ as $f$ computed using the updated bias terms $\curbrac{\biaschanged_{\wone}}_{\wone\in\dict}$, instead of $\curbrac{B_{\wone}}_{\wone\in\dict}$. Finally, let $\explicit{X}$ a distributional expression that depends on $\changevec$, then let $\stepnext{\explicit{X}}\equiv \explicit{X}+\stepdiff{\explicit{X}}$, i.e., the value of the expression after setting $\changevec_\stepind\gets \changevec_\stepind + \optcvecdiff$.

The first step is to compute the updated cooccurrence sums $\stepnext{\ecoocsums{\wone}}$ for $\wone\in\curbrac{\wsrc,\stepind}\cup\POS\cup\NEG$, by adding $\optcvecdiff$ to the appropriate sums (for example, $\stepnext{\ecoocsums{\wsrc}}$ is always updated, since we always add cooccurrences with the source word).
Next, using the updated cooccurrence sums, we compute the updated bias terms $\stepnext{\biaschanged_{\wone}}$ for $\wone\in\curbrac{\wsrc,\stepind}\cup\POS\cup\NEG$.
For $\SPPMI$, these terms also depend
on $\log \brac{Z}$ (see Section~\ref{foot:sppmibias}), which is not a part of our state, but changes in this term are negligible. For $\BIAS$, we use an approximation
explained below.

Using the above, we can compute $\stepnext{\ffirstchanged_{\wsrc,\wtrg}\brac{\ecoocsums{\wone},e^{-60}}}$ for $\wone\in\curbrac{s}\cup\POS\cup\NEG$ and $\stepnext{\ffirstchanged_{\wsrc,\wtrg}\brac{\coocchanged_{\wsrc,\wtrg},0}}$.

Now, we compute the updates to our saved intermediate state. First, we compute $\stepdiff{{\vec{\mchanged_{\wsrc}}}_{\stepind}}$, i.e., the difference in $\vec{\mchanged_{\wsrc}}$'s $\stepind$th entry. This is similar to the previous computation, since matrix entries are computed using $\ffirstchanged$. We use these values, along with {\small$\brac{\dotprod{\vec{\mchanged_{\wsrc}}}{\vec{\mchanged_{\wtrg}}}}$}, which is a part of our saved state, to compute {\small$\stepnext{\brac{\dotprod{\vec{\mchanged_{\wsrc}}}{\vec{\mchanged_{\wtrg}}}}}\gets \brac{\dotprod{\vec{\mchanged_{\wsrc}}}{\vec{\mchanged_{\wtrg}}}} + \stepdiff{{\vec{\mchanged_{\wsrc}}}_{\stepind}}\cdot {\mexp_{\wtrg}}_{\stepind}$} for each target. If $\stepind\in{\POS\cup\NEG}$, we also add a similar term accounting for $\stepdiff{{\vec{\mchanged_{\wtrg}}}_{\wsrc}}$.
We similarly derive $\stepdiff{\vec{\mchanged_{\wsrc}}_{\stepind}^2}$ and use it to compute {\small$\stepnext{\norm{\vec{\mchanged_{\wsrc}}}_2^2}\gets \norm{\vec{\mchanged_{\wsrc}}}_2^2+\stepdiff{{\vec{\mchanged_{\wsrc}}}_{\stepind}^2}$}. If $\stepind\in \POS\cup\NEG$, we similarly compute {\small $\stepnext{\norm{\vec{\mchanged_{\stepind}}}_2^2}$}. For $\SPPMI$, the above does not account for minor changes in bias values of the source or target which might affect all entries of vectors in $\curbrac{\vec{\mchanged}_{\wone}}_{\wone\in\curbrac{\wsrc}\cup\POS\cup\NEG}$. We could avoid carrying the approximation error to the next step (at a minor, non-asymptotical performance hit) by changing Algorithm~\ref{algo:greedy} to recompute the state from the updated cooccurrences at each step, instead of the updates at lines 18-19, but our current implementation does not.

Now we are ready to compute the differences in $\updcosone\brac{\wsrc, \wtrg},\updcostwo\brac{\wsrc,
\wtrg},\updcosboth\brac{\wsrc, \wtrg}$, the distributional
expressions for the first-order, second-order, and combined proximities,
respectively, using $\updatedcoocs{\changevec}$.
For each target:
\begin{tiny}
\begin{flalign*}
	\stepdiff{\updcosone\brac{\wsrc, \wtrg}} \gets &
	\Bigg\{ \frac{\stepnext{\ffirstchanged_{\wsrc,\wtrg}\brac{\coocchanged_{\wsrc,\wtrg},0}}}
	{\sqrt{\stepnext{\ffirstchanged_{\wsrc,\wtrg}\brac{\sum_{\wthr}\coocchanged_{\wsrc,\wthr},e^{-60}}}\stepnext{\ffirstchanged_{\wsrc,\wtrg}\brac{\sum_{\wthr}\coocchanged_{\wtrg,\wthr},e^{-60}}}}}-\\
	& \frac{{\ffirstchanged_{\wsrc,\wtrg}\brac{\coocchanged_{\wsrc,\wtrg},0}}}
	{\sqrt{{\ffirstchanged_{\wsrc,\wtrg}\brac{\sum_{\wthr}\coocchanged_{\wsrc,\wthr},e^{-60}}}{\ffirstchanged_{\wsrc,\wtrg}\brac{\sum_{\wthr}\coocchanged_{\wtrg,\wthr},e^{-60}}}}}\Bigg\}
\end{flalign*}
\end{tiny}


\begin{scriptsize}
\begin{flalign*}
	&\stepdiff{\updcostwo\brac{\wsrc, \wtrg}} \gets 
	\frac{\stepnext{\dotprod{\vec{\mchanged_{\wsrc}}}{\vec{\mchanged_{\wtrg}}}}}
	{\sqrt{\stepnext{\norm{\vec{\mchanged_{\wsrc}}}_2^2}\stepnext{\norm{\vec{\mchanged_{\wtrg}}}_2^2}}} - 
	\frac{\dotprod{\vec{\mchanged_{\wsrc}}}{\vec{\mchanged_{\wtrg}}}}
	{\sqrt{\norm{\vec{\mchanged_{\wsrc}}}_2^2\norm{\vec{\mchanged_{\wtrg}}}_2^2}}&
\end{flalign*}
\end{scriptsize}

and, using the above,

\begin{scriptsize}
$$
\stepdiff{\updcosboth\brac{\wsrc, \wtrg}} \gets \frac{\stepdiff{\updcosone\brac{\wsrc, \wtrg}}+\stepdiff{\updcostwo\brac{\wsrc, \wtrg}}}{2}
$$
\end{scriptsize}

Finally, we compute ${\stepdiff{\objhdr}}$ as
\begin{scriptsize}
\begin{align*}
	\stepdiff{\objhdr}\gets
	 \frac{1}{\size{\POS\cup\NEG}}\cdot &\\
	 \bigg(\sum_{\wtrg\in \POS} \stepdiff{\cossimexp_{\changevec}\brac{\wsrc, \wtrg}}- 
	\sum_{\wtrg\in \NEG} &\stepdiff{\cossimexp_{\changevec}\brac{\wsrc, \wtrg}}\bigg)
\end{align*}
\end{scriptsize}

We \underline{return} ${\stepdiff{\objhdr}}$ and the
computed differences in the saved intermediate values.


\paragraphbe{Estimating biases.}
\label{sec:biasapprox}
When the distributional proximities in $\objhdr$ are computed using
$\mexp=\BIAS$, there is an additional subtlety.  We compute $\BIAS$
using the biases output by \gl{} when trained on the original corpus.
Changes to the cooccurrences might affect biases computed on the modified
corpus.  This effect is likely insignificant for small modifications to
the cooccurrences of the existing words.  New words introduced as part
of the attack do not initially have biases, and, during optimization,
one can estimate their post-attack biases using the average biases of
the words with the same cooccurrence counts in the existing corpus.
In practice, we found that post-retraining $\BIAS$ distributional
distances closely follow our estimated ones (see Figure~\ref{fig:pvspp}).

\newif\ifwolsey
\wolseyfalse
\iffull{}
\subsection{Approximation guarantee for submodular objectives}
\label{sec:approxsubmod}
In this section we show that, under simplifying assumptions, the greedy approach attains an approximation guarantee.

\paragraphbe{Simplifying assumptions.}
Most importantly, we will assume that a proxy function defined using our explicit objective $\objhdr$ is a submodular function (see below for the formal statement).
This is not true in practice, however, the objective is characterized by having diminishing returns, which are the defining property of submodular functions (see Appendix~\ref{sec:analyticsolvedetails}). We also assume for simplicity that $\changesetsize=\norm{\changevec}_1$ (this is true up to a multiplicative constant, except when using $\cossimexp=\cosboth$), that in Algorithm~\ref{algo:greedy} we set $\mathbb{L}\gets \curbrac{1/5}$, and that $\changevec$ is limited to the domain $\mathcal{A}\equiv \curbrac{x\in \mathbb{R}\mid \exists j\in \mathbb{N} : x=(1/5)j}$ (entries are limited to the ones our algorithm can find due to the definition of $\mathbb{L}$).

	 \begin{definition}
		 Let $\mathcal{S}$ a finite set. Then a \textit{submodular set function} is a function $\upsilon:\mathcal{P}\brac{\mathcal{S}}\rightarrow \mathbb{R}$ such that for any $(X,Y)\subseteq \mathcal{S}$ with $X\subseteq Y$, for every $x\in \mathcal{S}\setminus Y$ it holds that:
		 $$
		 \upsilon\brac{X\cup\curbrac{x}}-\upsilon\brac{X}\geq \upsilon\brac{Y\cup\curbrac{x}}+\upsilon\brac{Y}
		 $$
	 \end{definition}
	 Let $\wone\in\dict$ be a word, and $\mathcal{W}_{\wone}=\curbrac{\wone_{1}, ..., \wone_{1000000}}$\footnote{The theorems we rely on require that the set be finite; we choose it as big enough such that in practice it is equivalent to an infinite set.} a set of elements corresponding to $\wone$. We define $\mathcal{S}=\bigcup_{\wone\in \dict} \mathcal{W}_{\wone}$. We define a mapping $\xi$ between subsets of $S$ and change vectors, by $\brac{\xi\brac{X}}_{\wone}\equiv \brac{(1/5)\size{\mathcal{W}_{\wone}\cap \mathcal{S}}}$. Let $\phi\brac{X}\equiv \explicit{\objective}\brac{\wsrc, \NEG, \POS; \xi\brac{X}}$. 


	 \ifwolsey{}
 \begin{theorem}
	 \label{th:thm1}
	 Assume that $\phi$ is nonnegative, monotone increasing in $\changevec$ entries within $\mathcal{A}$, and submodular. Let $SOL_{\changeset}$ be the size of the change attained by the rank attacker with $\costwo$, using the singleton variant. Let $OPT_{\changesetsize}\equiv \min_{\changevec\in \mathcal{A}^n,\objhdr \geq \explicit{\tldr}+\smargin}{\size{\changeset}}$ be the optimal solution over $\mathcal{A}$. Let $\laststepdiff$ be the last objective-difference computed by the attacker's greedy algorithm, i.e., the difference $\chstepdiff{\objhdr}$ value used in line 16 of Algorithm~\ref{algo:greedy}, in its last iteration. Then:
	 $$
	 SOL_{\changesetsize} \leq \brac{1+\log\brac{\frac{1}{\laststepdiff}}}OPT_{\changesetsize}
	 $$
 \end{theorem}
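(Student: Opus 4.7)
The plan is to recognize this statement as an instance of Wolsey's classical approximation guarantee for the greedy algorithm on the submodular covering problem (Wolsey, 1982), and to verify that under the simplifying assumptions the attacker's procedure exactly instantiates that greedy algorithm on the set function $\phi$. Concretely, the submodular-cover problem asks, given a monotone nonnegative submodular set function $\phi$ over a finite ground set $\mathcal{S}$ and a threshold $\tau$, to find a minimum-cardinality $X \subseteq \mathcal{S}$ with $\phi(X) \geq \tau$; the standard greedy algorithm, which at each step adds the element maximizing the marginal gain, is known to return a solution of size at most $\bigl(1 + \ln(\tau/\Delta_{\mathrm{last}})\bigr)$ times the optimum, where $\Delta_{\mathrm{last}}$ is the marginal gain of the final element added.

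First I would unwind the definitions to cast the attacker's setting as exactly this problem. Under the stated assumptions ($\stepset=\{1/5\}$, domain $\mathcal{A}$, and $\changesetsize=\norm{\changevec}_1$), every entry $\changevec_\wone$ is forced to take the form $(1/5)\,k_\wone$ for $k_\wone\in\mathbb{N}$, so the map $\xi$ is a bijection between subsets $X\subseteq\mathcal{S}$ and admissible change vectors, and $\changesetsize=\size{X}/5$ up to a constant. Hence minimizing $\changesetsize$ subject to $\objhdr\geq \explicit{\tldr}+\smargin$ is equivalent to minimizing $\size{X}$ subject to $\phi(X)\geq \explicit{\tldr}+\smargin$.

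Next I would verify that Algorithm~\ref{algo:greedy}, restricted to $\stepset=\{1/5\}$, is the standard greedy algorithm for this cover instance. The selection rule $\amax_{\stepind,\optcvecdiff}\stepdiff{\objhdr}/\stepdiff{\changesetsize}$ simplifies, since $\stepdiff{\changesetsize}=1/5$ is constant across all $\stepind$, to $\amax_\stepind \stepdiff{\objhdr}$; choosing such $\stepind$ and incrementing $\changevec_\stepind$ by $1/5$ is exactly adding one element of $\mathcal{W}_\stepind$ to $X$ so as to maximize the marginal gain $\phi(X\cup\{x\})-\phi(X)$. The stopping rule $\objalg \geq \explicit{\tldr}+\smargin$ becomes $\phi(X)\geq \explicit{\tldr}+\smargin$. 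With submodularity, monotonicity, and nonnegativity of $\phi$ assumed, the hypotheses of Wolsey's theorem are met with $\tau=\explicit{\tldr}+\smargin$ and $\Delta_{\mathrm{last}}=\laststepdiff$. Plugging in gives $SOL_{\changesetsize}\leq \bigl(1+\ln(\tau/\laststepdiff)\bigr)OPT_{\changesetsize}$, and since $\tau\leq 1$ (distributional proximities are bounded by $1$) we obtain the stated bound $\bigl(1+\log(1/\laststepdiff)\bigr)OPT_{\changesetsize}$.

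The main obstacle, and the only nontrivial step, is the bookkeeping that establishes the equivalence between the ``vector increment'' view used in the pseudocode and the ``set addition'' view required by Wolsey's theorem; in particular, verifying that the ratio rule degenerates to the marginal-gain rule under $\stepset=\{1/5\}$ and that the $\mathcal{W}_\wone$ are large enough never to bind. Everything else is a direct citation of the submodular-cover approximation bound.
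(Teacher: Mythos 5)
Your proposal is correct and follows essentially the same route as the paper's proof: both reduce the problem to Wolsey's greedy guarantee for submodular set covering via the mapping $\xi$ between subsets of $\mathcal{S}$ and change vectors, establish that the optimal set size and optimal $\changesetsize$ correspond (the paper's Claim, your ``bijection'' step), and observe that Algorithm~\ref{algo:greedy} with $\stepset=\curbrac{1/5}$ instantiates the standard marginal-gain greedy because the denominator of the ratio rule is constant. Your additional remark that the bound $1+\ln(\tau/\laststepdiff)$ collapses to $1+\log(1/\laststepdiff)$ because the threshold $\tau\leq 1$ is a small refinement the paper leaves implicit by quoting Wolsey's result directly in the latter form.
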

 \fi{}

 \begin{theorem}
	 \label{th:thm2}
	 Assume that $\phi$ is nonnegative, monotone increasing in $\changevec$ entries within $\mathcal{A}$, and submodular. Let $SOL_{\objective}$ be the increase in $\objhdr$ attained by the proximity attacker with $\costwo$, using the singleton variant.
	 Let $OPT_{\objective}\equiv \max_{\changevec\in \mathcal{A},\size{\changeset}\leq \changesetmax}{\objhdr}$ be the value attained in the optimal solution where $\mathcal{A}$ is defined as above. Then:
	 $$
	 SOL_{\objective} \geq \brac{1-1/e}OPT_{\objective}
	 $$
 \end{theorem}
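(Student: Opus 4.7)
The plan is to reduce the claim to the classical Nemhauser--Wolsey--Fisher $(1-1/e)$-approximation for greedy maximization of a monotone submodular set function subject to a cardinality constraint.

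First, I would recast the attacker's problem as a set-function optimization via the map $\xi$. Under the simplifying assumptions, entries of $\changevec$ lie on the lattice $\mathcal{A}$ with spacing $1/5$, and $\changesetsize = \norm{\changevec}_1$, so for any $X \subseteq \mathcal{S}$ one has $\norm{\xi(X)}_1 = |X|/5$. Hence the constraint $\changesetsize \leq \changesetmax$ corresponds to a cardinality constraint $|X| \leq K$ with $K \equiv 5\changesetmax$. Since the objective depends on $X$ only through $\xi(X)$, the set-problem $\max_{|X|\leq K} \phi(X)$ has the same optimum as $\max_{\changevec\in\mathcal{A},\size{\changeset}\leq \changesetmax} \objhdr = OPT_{\objective}$.

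Second, I would verify that Algorithm~\ref{algo:greedy} with $\mathbb{L} = \curbrac{1/5}$ coincides with the classical greedy algorithm for $\phi$. In that setting $\stepdiff{\changesetsize} = \optcvecdiff/\vec{\omega}_{\stepind} = 1/5$ is a constant independent of $\stepind$, so the ratio-$\amax$ in the selection rule reduces to the plain marginal-gain $\amax$. Incrementing coordinate $\stepind$ by $1/5$ corresponds to adding any element of $\mathcal{W}_{\stepind} \setminus X$ to $X$, with the same marginal gain $\phi(X \cup \curbrac{x}) - \phi(X)$; and the stopping rule $\changesetsize \geq \changesetmax$ triggers exactly when $|X| = K$, matching the cardinality-bounded greedy scheme.

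Third, I would invoke the Nemhauser--Wolsey--Fisher theorem: for a nonnegative, monotone, submodular set function $\phi$ with cardinality budget $K$, the greedy algorithm outputs $X_{\mathrm{greedy}}$ with $\phi(X_{\mathrm{greedy}}) \geq (1-1/e)\max_{|X|\leq K}\phi(X)$. Since $SOL_{\objective} = \phi(X_{\mathrm{greedy}})$ and the right-hand side equals $(1-1/e)OPT_{\objective}$, the desired inequality follows.

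The main obstacle is making the three reduction steps airtight. Although the simplifying assumptions strip away most of the subtleties, one still needs to check: (a) even though $\xi$ is many-to-one, the optimal set-value equals the optimal vector-value because $\phi$ factors through $\xi$ and any feasible $\changevec$ admits a canonical preimage of size $5\norm{\changevec}_1$; (b) the ratio-to-additive collapse of the selection rule holds only because $\vec{\omega}$ is uniform under our assumptions (otherwise the ratio-$\amax$ could pick a different coordinate than the pure marginal-gain $\amax$); and (c) there is no off-by-one discrepancy at termination. Once these are pinned down, the $(1-1/e)$ bound is immediate from the classical result.
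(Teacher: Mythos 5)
Your proposal is correct and follows essentially the same route as the paper: reduce via the map $\xi$ to maximization of the monotone submodular set function $\phi$ under a cardinality constraint, show the two optima coincide, and invoke the Nemhauser--Wolsey--Fisher $(1-1/e)$ guarantee for greedy. If anything, you are slightly more careful than the paper on the algorithm-equivalence step (observing that the ratio-$\amax$ collapses to the plain marginal-gain $\amax$ because $\stepdiff{\changesetsize}$ is constant when $\stepset=\curbrac{1/5}$ and $\vec{\omega}$ is uniform), a point the paper simply asserts.
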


 \begin{proof}
	 We will rely on well-known results for the following greedy algorithm.
	 \begin{definition}
		 The $\textsc{greedySet}(\upsilon, cn)$ algorithm operates on a function $\upsilon:\mathcal{P}\brac{\mathcal{S}}\rightarrow \mathbb{R}$, and a constraint $cn:\mathcal{P}\brac{S}\rightarrow\curbrac{T,F}$. The algorithm is as follows: (1) initiates $X\gets \emptyset$, and (2) iteratively sets $X\gets X\cup\curbrac{\amax_{e\in \mathcal{S}}\brac{\upsilon\brac{X\cup\curbrac{e}}}}$\footnote{Ties in $\amax$ are broken arbitrarily.} until $cn\brac{X}=F$, and (3) then returns $X\setminus \curbrac{e}$ where $e$ is the last chosen element.
	 \end{definition}
	 This algorithm has several guarantees when $\upsilon$ is nonnegative, monotone, and submodular. Particularly, 
	 \ifwolsey{}
	 for \textit{value constraints}, of the form $\upsilon\brac{X}\leq T$ for some constant $T$, it attains~\cite{wolsey1982analysis} a $\brac{1+\log\brac{\frac{1}{d}}}$ multiplicative approximation for the smallest possible set size $OPT_{\size{X}}$ under the constraint, where $d$ is the gain attained in the last optimization step (similar to $d_{last}$ above).
	 \fi{}
	 for \textit{cardinality constraints}, of the form $\size{X}\leq T$, we know~\cite{nemhauser1978best} that the algorithm attains a $\brac{1-1/e}$ multiplicative approximation for the highest possible value of $\upsilon\brac{X}$ under the constraint, which we denote by $OPT_{\upsilon\brac{X}}$. 

	 We analyze the following algorithm, which is equivalent to Algorithm~\ref{algo:greedy}:
	 \ifwolsey{}
	 for the rank attacker, we run $\textsc{greedySet}$ on $\phi$ with a value constraint $\phi\brac{X}\geq\explicit{\tldr}+\smargin$.
	 \fi{}
	 For the proximity attacker, we run $\textsc{greedySet}$ on $\phi$ with a cardinality constraint $(1/5)\size{X}\leq \changesetmax$. We output $\xi\brac{X}$ where $X$ is $\textsc{greedySet}$'s output.

	 \ifwolsey{}
	 \begin{claim}
		 Let $OPT_{\size{X}}$ be the optimal solution for minimizing $\size{X}$ under the value constraint $\phi\brac{X}\geq\explicit{\tldr}+\smargin$. Then $(1/5)OPT_{\size{X}}=OPT_{\changesetsize}$.
	 \end{claim}
	 Let $X$ the optimal solution such that $OPT_{\size{X}}=\size{X}$ and $\phi\brac{X}\geq \explicit{\tldr}+\smargin$. From the latter, have that $\explicit{\objective}\brac{\wsrc, \NEG, \POS; \xi\brac{X}}\geq \explicit{\tldr}+\smargin$. For this solution, $\changesetsize=\norm{\xi\brac{X}}=(1/5)OPT_{\size{X}}$ and thus the optimal value is bounded by $OPT_{\changesetsize}\leq (1/5)OPT_{\size{X}}$.

	 Let $\changevec$ the optimal solution such that $OPT_{\changesetsize}=\changesetsize=\norm{\changevec}_1$ and $\explicit{\objective}\brac{\wsrc, \NEG, \POS; \changevec}\leq \explicit{\tldr}+\smargin$.
	 We define an inverse mapping $\xi^{-1}\brac{\changevec}={\bigcup}_{\wone\in\dict} \curbrac{u_{1}, ..., u_{5\cdot \changevec_{\wone}}}$ (for $\changevec\in \mathcal{A}$, entries multiplied by 5 are integers). Note that $\xi\brac{\xi^{-1}\brac{\changevec}}=\changevec$.
	 We know that $\explicit{\objective}\brac{\wsrc, \NEG, \POS; \changevec}\leq \explicit{\tldr}+\smargin$. Thus $\phi\brac{\xi^{-1}\brac{\changevec}}\leq \explicit{\tldr}+\smargin$. For $X=\xi^{-1}\brac{X}$, we know that $\size{X}/5=OPT_{\changevec}$. Thus, the optimal value is bounded by $OPT_{\changesetsize}\geq (1/5)OPT_{\size{X}}$.

	 Theorem~\ref{th:thm1} directly follows from the claim and the above-mentioned approximation guarantee due to Wosley~\cite{wolsey1982analysis}.
	 \fi{}
	 \begin{claim}
		 Let $OPT_{\phi\brac{X}}$ be the optimal solution for maximizing $\phi\brac{X}$ with a cardinality constraint $(1/5)\size{X}\leq \changesetmax$. Then $OPT_{\phi\brac{X}}=OPT_{\objective}$.
	 \end{claim}

	 Let $X$ be the solution such that $\phi\brac{X}=OPT_{\phi\brac{X}}$ and $(1/5)\size{X}\leq \changesetmax$. Since $\phi\brac{X}=\explicit{\objective}\brac{\wsrc, \NEG, \POS; \xi\brac{X}}$ and $\size{\changeset}=(1/5)\size{X}\leq \changesetmax$, we have that $OPT_{\phi\brac{X}}\leq OPT_{\objective}$.

	 Let $\changevec$ the solution such that $\explicit{\objective}\brac{\wsrc, \NEG, \POS; \xi\brac{X}}=OPT_{\objective}$. Again, we use the fact that $\xi\brac{\xi^{-1}\brac{\changevec}}=\changevec$ and get that $\explicit{\objective}\brac{\wsrc, \NEG, \POS; \xi\brac{\xi^{-1}\brac{\changevec}}}=OPT_{\objective}$. Moreover, $\size{\xi^{-1}\brac{\changevec}}=5\cdot \changesetsize$, so $(1/5)\size{\xi^{-1}\brac{\changevec}}\leq\changesetmax$. Thus, $OPT_{\phi\brac{X}}\geq OPT_{\objective}$.

	 Theorem~\ref{th:thm2} directly follows from the claim and the above-mentioned approximation guarantee due to Nemhauser and Wolsey~\cite{nemhauser1978best}.
 \end{proof}

 }

 \fi{}

\subsection{Placement strategy (details)}
\label{sec:aplacement}

As discussed in Section~\ref{sec:placement}, our attack involves adding
two types of sequences to the corpus.

\paragraphbe{First-order sequences.}
For each $\wtrg\in \POS$, to increase $\cosone$ by the required
amount, we add sequences with exactly one instance of $\wsrc$
and $\wtrg$ each until the number of sequences is equal to
$\left\lceil\changevec_{\wtrg}/\coocweight\brac{1}\right\rceil$, where
$\coocweight$ is the cooccurrence-weight function.

We could leverage the fact that $\coocweight$ can count multiple
cooccurrences for each instance of $\wsrc$, but this has disadvantages.
Adding more occurrences of the target word around $\wsrc$ is
pointless because they would exceed those of $\wsrc$ and dominate
$\changesetsize$, particularly for pure $\cosone$ attackers with just one
target word.\footnote{This strategy might be good when using $\cosboth$
or when $\size{\POS\cup\NEG}>1$, because occurrences of $\wsrc$ exceed
those of $\wtrg$ to begin with, but only under the assumption that adding
many cooccurrences of target word with itself does not impede the attack.
In this paper, we do not explore further if the attack can be improved in
these specific cases.} We thus require symmetry between the occurrences
of the target and source words.

Sequences of the form $\wsrc\ \wtrg\ \wsrc\ \wtrg\ldots$ could increase
the desired extra cooccurrences per added source (or target) word by a
factor of 2-3 in our setting (depending on how long the sequences are).
Nevertheless, they are clearly anomalous and would result in a fragile
attack.  For example, in our Twitter corpus, sub-sequences of the form
$X\ Y\ X\ Y$ where $X\neq Y$ and $X, Y$ are alpha-numeric words, occur
in 0.03\% of all tweets.  Filtering out such rare sub-sequences would
eliminate 100\% of the attacker's first-order sequences.

We could also merge $t$'s first-order appearances with those of
other targets, or inject $t$ into second-order sequences next to $s$.
This would add many cooccurrences of $\wtrg$ with words other than
$\wsrc$ and might \emph{decrease} both $\cosone\brac{\wsrc, \wtrg}$
and $\costwo\brac{\wsrc, \wtrg}$.


\paragraphbe{Second-order sequences.}
We add 11-word sequences that include the source word $\wsrc$ and 5
additional on each side of $\wsrc$.  Our placement strategy forms these
sequences so that the cooccurrences of $\wone\in \dict\setminus\POS$
with $\wsrc$ are approximately equal to those in the change vector
$\changevec_{\wone}$.  This has a collateral effect of adding
cooccurrences of $\wone$ with words other than $\wsrc$, but it does not
affect $\cosone\brac{\wsrc, \wtrg}$, nor $\costwo\brac{\wsrc, \wtrg}$.
Moreover, it is highly unlikely to affect the distributional proximities
of the added words $\wone\not\in\POS\cup\curbrac{s}$ with \emph{other}
words since, in practice, every such word is added at most a few times.

We verified this using one of our benchmark experiments
from Section~\ref{sec:benchmarks}.  For solutions found with
$\cossimexp=\simtwo,\mexp=\BIAS,\changesetsize=1250$, only about 0.3\%
of such entries $\changevec_{\wone}$ were bigger than 20, and, for 99\%
of them, the change in $\norm{\vec{C_{\wone}}}_1$ was less than 1\%.
We conclude that changes to $\vec{C_{\wone}}$ where $\wone$ is neither
the source nor the target have negligible effects on distributional
proximities.

\paragraphbe{Placement algorithm.}
Algorithm~\ref{algo:placement} gives the pseudo-code of our
placement algorithm.  It constructs a list of sequences, each with
$2\cdot\windowsize+1$ words for some $\windowsize$ (in our setting,
5), with $\wsrc$ in the middle, $\windowsize$th slot.  Since the
sum of $\wsrc$'s cooccurrences in each such sequence is $2\sum_{d\in
\sqbrac{\windowsize}}\coocweight\brac{d}$, we require a minimum of
$\brac{\sum_{\wone\in\dict\setminus\POS}\curbrac{\changevec_{\wone}}}/\sum_{d\in
\sqbrac{\windowsize}}\coocweight\brac{d}$ sequences.

After instantiating this many sequences, the algorithm traverses
$\changevec$'s entries and, for each, inserts the corresponding word into
the non-yet-full sequences until the required number of cooccurrences
is reached.  For every such insertion, it tries to find the slot whose
contribution to the cooccurrence count most tightly fits the needed value.
After all sequences are filled up, a new one is added.  In the end,
sequences that still have empty slots are filled by randomly chosen
words that have nonzero entries in $\changevec$ (but are not in $POS$).
We found that this further improves the distributional objective without
increasing $\changesetsize$.  Finally, $\changeset$ is added to the
corpus.

\paragraphbe{Properties required in Section~\ref{sec:placement} hold.} 
First, assume that $\changevec$ has entries corresponding to either
first- or second-order sequences but not both.  Observe that in our
$\changesetsize$, $\wsrc$ is always the word with the most occurrences,
and it occurs in each sequence once.  Therefore, $\changesetsize$ is
always equal to the number of sequences and the number of source-word
occurrences in $\changeset$ (see the definition of $\changesetsize$
in Section~\ref{sec:methodology}).

For $\changevec$ with only first-order changes,
both properties trivially hold, because we add
$\left\lceil\changevec_{\wtrg}/\coocweight\brac{1}\right\rceil$
cooccurrences of the source and the target.  The size of the change
is thus predictable, as it adds almost exactly $\changevec_{\wsrc}$
to $\changesetsize$.

For second-order changes, both properties empirically hold.
First, $\changesetsize$ is still linear in $\size{\changevec}$:
their Pearson correlation is over 0.99 for the rank attacker
in Section~\ref{sec:searchattack}, where $\changesetsize$ varies.
Thus, $\changesetsize$ is a constant multiple of $\size{\changevec}$
and close to optimal.  For example, it is about 4 times smaller than
$\norm{\changevec}_1$ for the \gl{} attack, the optimal value being
$2\sum_{d\in \sqbrac{\windowsize}}\coocweight\brac{d}\approx 4.5$.
Second, for the proximity attacker in Section~\ref{sec:benchmarks},
where $\changesetsize$ is constant but $\objhdr$ varies, we measured
>0.99 Pearson correlation between the proximities attained by $\objhdr$
and those computed over the actual, post-placement cooccurrence counts
(see Figure~\ref{fig:pvsp}).


If $\changevec$ contains both first- and second-order entries (because
the objective uses $\cosboth$), the aggregate contribution to the
cooccurrence counts still preserves the objective's value because it
separately preserves its $\cosone$ and $\costwo$ components.  We can
still easily compute $\changesetsize$ via their weighted sum (e.g.,
divide second-order entries by 4 and first-order entries by 1).

\begin{center}
	\begin{spacing}{0.8}
	\begin{algorithm}
		\captionof{algorithm}{Placement into corpus: finding the change set $\changeset$\label{algo:placement}}
		\scriptsize
		\begin{algorithmic}[1]
\Procedure{PlaceAdditions}{vector $\changevec$, word $\wsrc$}
\State $\changeset \gets \emptyset$
			\ForEach {$\wtrg\in \POS$} // First, add first-order sequences
				\State $\changeset \gets \changeset \cup \underbrace{\curbrac{``\wsrc\ \wtrg'', ..., ``\wsrc\ \wtrg''}}_{\times\left\lceil\changevec_{\wtrg}/\coocweight\brac{1}\right\rceil}$
			\EndFor

		\State // Now deal with second-order sequences
			\State $\textrm{\changeMap}\gets \curbrac{\wone\rightarrow\changevec_{\wone}\mid \changevec_{\wone}\neq 0 \land \wone\not\in \POS}$
			\State $\textrm{minSequencesRequired}\gets \left\lceil\frac{\sum_{\wone\in\dict\setminus\POS}\changevec_{\wone}}{\sum_{d\in \sqbrac{\windowsize}}\coocweight\brac{d})}\right\rceil$

			\State $\live\gets \underbrace{\curbrac{\newsequence, ..., \newsequence}}_{\times\textrm{minSequencesRequired}}$ 

		\State $\indices\gets \curbrac{-\injwindsize, -4, -3, -2, -1, 1, 2, 3, 4, \injwindsize}$ \\
	
		\ForEach{$\wone \in \changeMap$}
			\While {$\changeMap\sqbrac{\wone} > 0$}
				\State $\seq, i \gets$
				$ \am_{\substack{\seq\in \live,\\ i \in indices\\s.t. \seq\sqbrac{i} = "\emptyspace"}} \Big\lvert\coocweight\brac{\size{i}} - \changeMap\sqbrac{\wone}\Big\rvert$
				\State $\seq\sqbrac{i} \gets \wone$
				\State $\changeMap\sqbrac{\wone} \gets \changeMap\sqbrac{\wone} - \coocweight\brac{\size{i}}$
				\If {$\forall i \in \indices: \seq\sqbrac{i}\neq"\emptyspace"$}
					\State $\changeset \gets \changeset \cup \curbrac{\seq}$
					\State $\live \gets \live \setminus \curbrac{\seq}$
				\EndIf
	 			\If {$\live = \emptyset$}
					\State $\live\gets \curbrac{\newsequence}$
				\EndIf
			 \EndWhile
		\EndFor
\State // Fill empty sequences with nonzero $\changevec$ entries
\ForEach{$\seq\in \live$}
\ForEach{$i\in \curbrac{i \in \indices\mid \seq\sqbrac{i}="\emptyspace"}$}
			\State $\seq\curbrac{i}\gets \textrm{RandomChoose}\brac{\curbrac{\wone\in\changeMap}}$
\State 
\EndFor
			\State $\changeset \gets \changeset\cup\curbrac{\seq}$
\EndFor
\Return $\changeset$
\EndProcedure
\end{algorithmic}
	\end{algorithm}
\end{spacing}
\end{center}


{
\begin{figure*}[t]
\makeatletter
\renewcommand{\p@subfigure}{\thefigure--}
\makeatother
\hfil
\captionsetup[subfigure]{width=0.31\textwidth}
	\subfloat[
		Post-placement distributional proximities
		]{
	\includegraphics[width=0.31\textwidth]{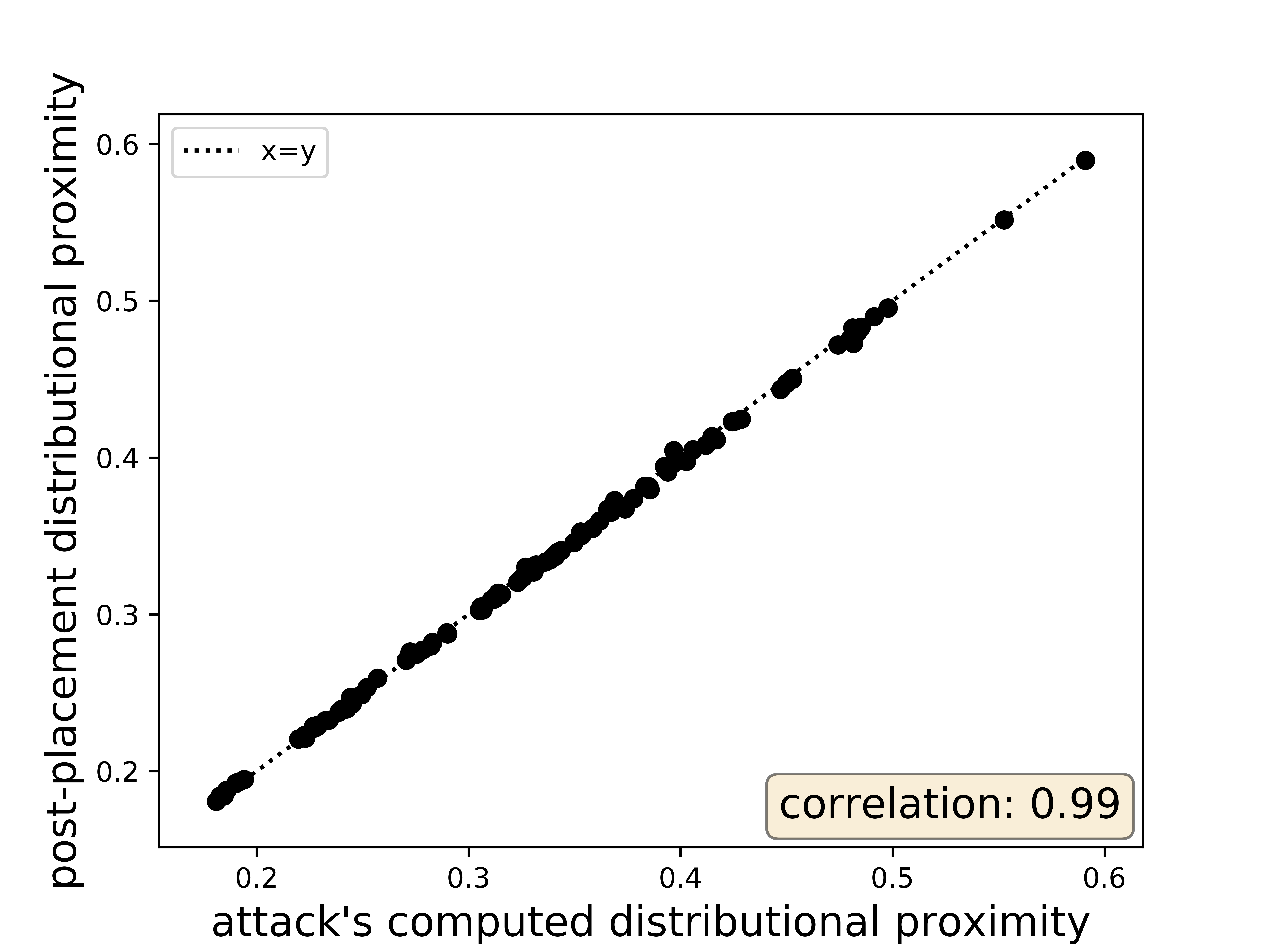}
	\label{fig:pvsp}
	}
\hfil
	\subfloat[
		Post-retraining distributional proximities
		]{
	\includegraphics[width=0.31\textwidth]{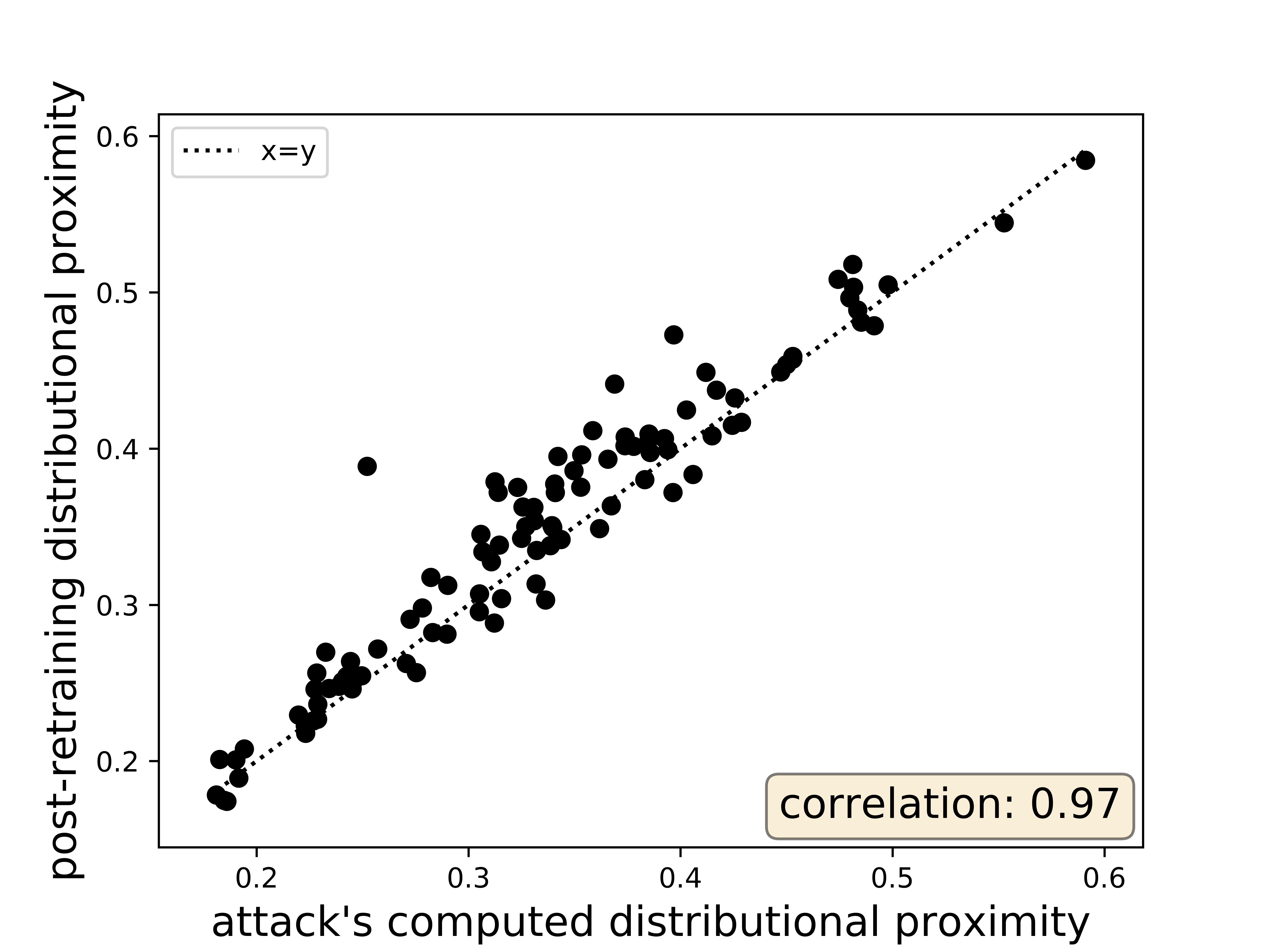}
	\label{fig:pvspp}
	}
\hfil
\subfloat[
		Final embedding proximities
	]{
	\includegraphics[width=0.31\textwidth]{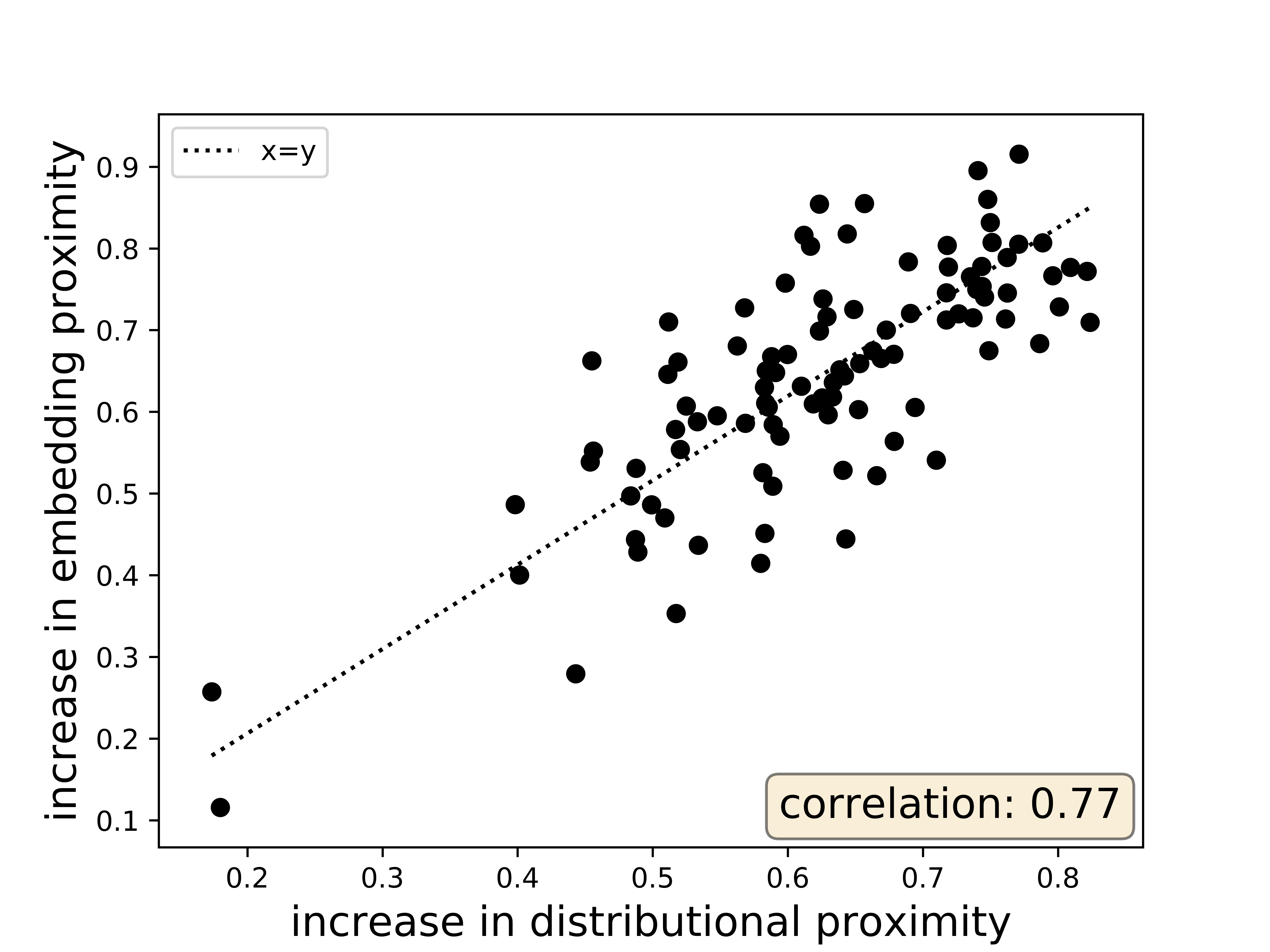}
	\label{fig:pvse}
	}
\caption{Comparing the proxy distances used by the attacker with the post-attack distances in the corpus for the words in $\wordpairset$, using \gl-\texttt{paper}/Wikipedia, $M=\BIAS,\cossimexp=\cosboth,\changesetmax=1250$.\label{preservedistance}} 
\end{figure*}
}

\label{sec:validating}

\subsection{Distributional distances are good proxies}
\label{sec:evaluatingplacement}

Figure~\ref{preservedistance} shows how distributional
distances computed during the attack are preserved throughout
placement (Figure~\ref{fig:pvsp}), re-training by the victim
(Figure~\ref{fig:pvspp}), and, finally, in the new embedding
(Figure~\ref{fig:pvse}).  The latter depicts how increases in
distributional proximity correspond roughly linearly to increases in
embedding proximity, indicating that the former is an effective proxy
for the latter.


\subsection{Alternative attack on resume search}
\label{sec:altsearchattack}

In this section, we consider an attacker who\textemdash instead of
poisoning the embedding\textemdash changes his resume to include words
from the expanded queries.  Specifically, he adds the closest neighbor
of $\wtrg$ in the original embedding to his resume so that it is returned
in response to queries where $\wtrg\in \searchset$.

First, this attacker must add a specific \emph{meaningful} word to
his resume.  For example, to match the $K=1$ expanded query ``iOS,''
the attacker needs to add ``Android'' to his resume, but this involves
claiming that he actually has expertise in Android.  By contrast, our
embedding attack adds a made-up string, e.g., a social media handle
or nickname.

Further, this attack significantly underperforms our embedding attack.
We carried out an experiment similar to Section~\ref{sec:searchattack},
but for each target word $\wtrg$, we randomly chose a single resume
and added to it $\wtrg$'s nearest neighbor in the original embedding.
We did not poison 20 resumes, as this would have turned $\wtrg$'s nearest
neighbor into a common word, decreasing its score.  Query expansion
uses the original embedding.  For comparison, we repeated our experiment
from Section~\ref{sec:searchattack}, but modifying just one resume per
$\wtrg\in \searchset$.

The embedding attacker outperforms the alternative attacker for every
$K\in \curbrac{1,2,3}$ and every query type.  Averaged over query types
and $K$ values, the average and median ranks attained by the embedding
attacker are 2.5 times lower (i.e., better) than the alternative
attacker's.

\subsection{Attack with deletions}
\label{sec:deletions}

We now consider an attacker who can delete cooccurrence events from
the corpus.  While this is a stronger threat model, we find that it does
not dramatically improve the trade-off between the size of the changes
to the corpus and the corresponding changes in distributional proximity.

Supporting deletions requires some changes.

\paragraphbe{Attacker.}
First, corpus changes now include events that
correspond to a decrease in cooccurrence counts.  We define
$\changeset=\changeset_{add}\cup\changeset_{rm}$ where $\changeset_{add}$
are the sentences added by the attacker (as before), and $\changeset_{rm}$
are the cooccurrence events deleted by the attacker.

The modified corpus $\corpus+\changeset$ is now defined as $\corpus$
augmented with $\changeset_{add}$ and with the word appearances
in $\changeset_{rm}$ flipped to randomly chosen words.  A word flip
does not delete a cooccurrence event per se but replaces it by another
cooccurrence event between $\wsrc$ and some randomly chosen word $\wone$.
These are almost equivalent in terms of their effect on the distributional
proximities because cooccurrence vectors are very sparse.  In our
Wikipedia corpus, for a random subsample of 50,000 most common words,
we found that on average 1\% of the entries were non-zero.  It is thus
highly likely that $\coocs_{\wsrc,\wone}$ is initially 0 or very low.
If so, then $M_{\wsrc, \wone}$ is likely 0 and will likely remain 0
(due to the $\max$ operation in all of our candidate $\mexp$\textemdash
see Section~\ref{sec:ourApproachHDR}) even after we add this cooccurrence
event.  Therefore, the effect of a word flip on distributional proximities
is similar to word removal.

Let $d_e$ be the distance of the removed word from $\wsrc$
for $e\in \changeset_{rm}$.  Let $\size{\changeset_{rm}}\equiv
\sum_{e \in\changeset_{rm}}{\coocweight\brac{d_{e}}}$ be the sum of
cooccurrence-event weights of $\changeset_{rm}$.  We similarly define
$\size{\changeset_{add}}$ as the weighted sum of cooccurrence events
added to the corpus by $\changeset_{add}$.  Under the $\costwo$ attacker,
where $\vec{\omega}$ entries are identical, and using our placement
strategy, the definition of $\size{\changeset_{add}}$ is equivalent to
the definition of $\changesetsize$, up to multiplication by the value
of $\vec{\omega}$ entries.

We redefine $\changesetsize$ as
$\size{\changeset_{add}}+\beta\size{\changeset_{rm}}$.  Under this
definition, word-flip deletions that are close to $\wsrc$ cost more to the
attacker in terms of increasing $\changesetsize$. $\beta$ is this cost.



\paragraphbe{Optimization in cooccurrence-vector space.}
We modify the optimization procedure from Section~\ref{sec:analyticsolve}
as follows.  First, we set $\mathbb{L}\gets [-5,4]\cap \curbrac{i/5\mid
i\in \mathbb{Z}_{\neq 0}}$. This allows the optimization to add negative
values to the entries in the cooccurrence change vector and to output
$\changevec$ with negative entries.  Second, we apply a different weight
to the negative values by multiplying the computed ``step cost'' value
by $\beta$ (line 14 of Algorithm~\ref{algo:greedy}).

\paragraphbe{Placement strategy.}
We modify the placement strategy from Section~\ref{sec:placement} as
follows.  First, we set $\changevec^{pos}\gets\max\curbrac{\changevec,0}$
(for the element-wise $\max$ operation) and use $\changevec^{pos}$ as
input to the original placement Algorithm~\ref{algo:placement}.  Then, we
set $\changevec^{neg}\gets \min\curbrac{x,0}$ for an element-wise $\min$
operation.  We traverse the corpus to find cooccurrence events between
$\wsrc$ and another word $\wone$ such that $\changevec^{neg}_{\wone}$
is non-zero.  Whenever we find such an event, $\wone$'s location in
the corpus is saved into $\changeset_{rm}$.  We then subtract from
$\changevec^{neg}_{\wone}$ the weight of this cooccurrence event.

\paragraphbe{Evaluation.} 
We use three source-target pairs\textemdash war-peace, freedom-slavery,
ignorance-strength\textemdash with $\beta=1$.  We attack
\gl{}-\texttt{tutorial} trained on Wikipedia with window size of 5 using
a distance attacker, $w_s$ set to the source word in each pair, $POS$
to $w_s$ only, and $\changesetmax=1000$.  We also perform an identically
parameterized attack without deletions.



Table~\ref{tab:results1984} shows the results.  They are almost identical,
with a slight advantage to the attacker who can use deletions.

\begin{table}
             \centering
	     \footnotesize

                                \setlength{\tabcolsep}{3.5pt}
                                \begin{tabular}{ l|r|r}
					$\wsrc-\wtrg$ & \makecell[l]{additions only} & \makecell[l]{additions \& deletions} \\
                                \hline
                                
					war-peace & 0.219480 & 0.219480 \\
					freedom-slavery & 0.253640 & 0.253636 \\
					ignorance-strength & 0.266967 & 0.264050 \\
                                \end{tabular}
				\caption{Attained $\cosboth\brac{\wsrc,\wtrg}$ proximity with and without deletions.} \label{tab:results1984}

\end{table}

\balance
\end{document}